\newtheorem{proposition}{Proposition}
\definecolor{cvprblue}{rgb}{0.21,0.49,0.74}
\title{From Structure to Detail: Hierarchical Distillation for Efficient Diffusion Model}
\author{
Hanbo Cheng$^{1,2}$, Peng Wang$^{2}$, Kaixiang Lei$^{2}$, Qi Li$^{2}$, Zhen Zou$^{1,2}$, Pengfei Hu$^{1}$, Jun Du$^{1\dagger}$\\
$^1$University of Science and Technology of China, $^2$ByteDance China \\
}
\begin{document}
\maketitle
\renewcommand{\thefootnote}{} 
\footnotetext{$\dagger$ Corresponding author.}
\begin{abstract}
The inference latency of diffusion models remains a critical barrier to their real-time application. While trajectory-based and distribution-based step distillation methods offer solutions, they present a fundamental trade-off. Trajectory-based methods preserve global structure but act as a ``lossy compressor", sacrificing high-frequency details. Conversely, distribution-based methods can achieve higher fidelity but often suffer from mode collapse and unstable training. This paper recasts them from independent paradigms into synergistic components within our novel Hierarchical Distillation (HD) framework. We leverage trajectory distillation not as a final generator, but to establish a structural ``sketch", providing a near-optimal initialization for the subsequent distribution-based refinement stage. This strategy yields an ideal initial distribution that enhances the ceiling of overall performance. To further improve quality, we introduce and refine the adversarial training process.  We find standard discriminator structures are ineffective at refining an already high-quality generator. To overcome this, we introduce the Adaptive Weighted Discriminator (AWD), tailored for the HD pipeline. By dynamically allocating token weights, AWD focuses on local imperfections, enabling efficient detail refinement. Our approach demonstrates state-of-the-art performance across diverse tasks. On ImageNet $256\times256$, our single-step model achieves an FID of 2.26, rivaling its 250-step teacher. It also achieves promising results on the high-resolution text-to-image MJHQ benchmark, proving its generalizability. Our method establishes a robust new paradigm for high-fidelity, single-step diffusion models.
\end{abstract}    
\section{Introduction}
\label{sec:intro}

\begin{figure}
    \centering
    \includegraphics[width=\linewidth]{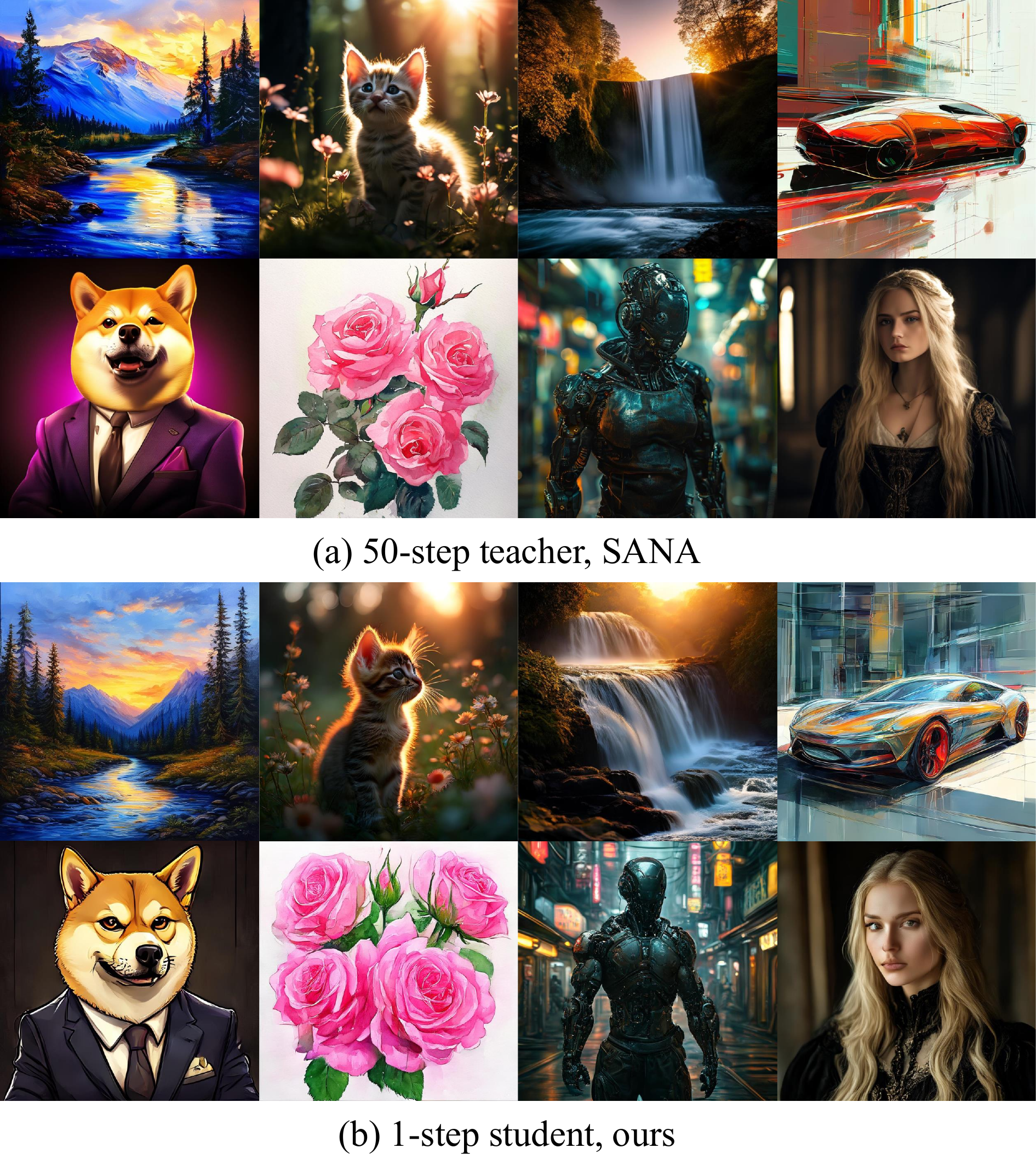}
    \caption{Comparison of generation quality between the 50-step teacher, SANA \cite{SANA}, and our 1-step HD method. Our approach achieves comparable quality to the multi-step teacher.}
    \label{fig:comp_teacher}
\end{figure}

Although diffusion models have revolutionized applications such as visual generation, their widespread, real-time application is severely hindered by high inference latency \cite{score_based, yang2023diffusion, DDPM}. This latency stems from the conventional iterative denoising process which often requires 50 to 100 steps, making it prohibitive for latency-critical applications. Consequently, achieving high-fidelity, single-step or few-step generation has become a crucial research frontier. In response to the aforementioned challenge, numerous studies have explored accelerating diffusion model inference via step distillation \cite{hypersd, meng2023distillationguideddiffusionmodels}. These efforts predominantly fall into two main categories: trajectory-based and distribution-based distillations. Trajectory-based methods, namely the Trajectory Distillation (TD) \cite{CM, LCM, TCM, shortcut, wang2024phased} constrain a student model to ensure its single-step output approximates the complete trajectory generated by a multi-step teacher model solving the Probability Flow Ordinary Differential Equation (PF-ODE). Conversely, distribution-based methods \cite{DMD, fdiv_DMD, LADD, ADD} are designed to directly optimize the student model's output distribution from a single or few steps to match the true data distribution. Both families of methods have yielded significant improvements in inference efficiency.

Nevertheless, when pushed to the limit of single-step generation, these two dominant paradigms reveal their respective inherent and fundamental drawbacks. We propose that TD can be understood as a process of lossy information compression. It forces a student model of limited capacity to reproduce the intricate multi-step temporal dynamics of a teacher model in limited forward passes. This mapping from a high-dimensional trajectory to a low-dimensional output inevitably discards crucial fine-grained details. As a result, although TD methods preserve the structural integrity of the generated content by following the teacher's path, this comes at the cost of final sample fidelity. Conversely, distribution-based approaches \cite{DMD, LADD, TDM, sdxl-lightning} provide the student model with more flexibility, freeing it from the constraints of a potentially suboptimal trajectory and theoretically enabling higher one-step generation quality. However, the optimization process is typically guided by a reverse-KL objective, which exhibits a ``zero-forcing" behavior on the generator's distribution \cite{fdiv_DMD, SiD, ADM}. This property incentivizes the generator to collapse to a few high-density modes, compromising output diversity, especially without a strong initial distribution.

To address these issues mentioned above, we begin by conducting a systematic theoretical and experimental analysis of the performance bottleneck in TD. As shown in Section \ref{sec:Theoretical Grounding} and \ref{sec:toy_experiment}, TDs are highly effective at injecting a ``structural prior" from the teacher's trajectory into the student model, thereby ensuring the macroscopic structure of the generated content. 
However, the information compression inherent in the single-step or few-step mapping inevitably leads to the loss of high-frequency details with limited model capacity. 
This insight reveals a natural complementarity: \textit{the details lost by TD are exactly what distribution-based approaches are designed to refine}. Motivated by this, we propose a novel Hierarchical Distillation (HD) pipeline that first employs TD to generate a structurally sound ``sketch", which provides a well-posed initial distribution for the student model. This pre-trained model then serves as the generator in a subsequent Distribution Matching (DM) stage for refinement, effectively stabilizing the training process. To overcome the potential mode collapse, we further introduce and refine the adversarial distillation technique. However, we observed that when the generator is already well-initialized, conventional discriminators with Global Average Pooling (GAP) are inefficient at providing supervision signals. Therefore, we designed a novel Adaptive Weighted Discriminator (AWD), which dynamically adjusts the spatial weights of the adversarial loss. This allows it to more precisely guide the generator in refining details and enhancing diversity. Our main contributions can be summarized as follows:
\begin{itemize}
    \item Our systematic analysis of Trajectory Distillation (TD) reveals its fundamental nature as a lossy compression process. This explains its inherent trade-off: while effectively retaining global structure, it inevitably fails to preserve fine-grained details.
    \item By reconsidering the respective roles of trajectory-based and distribution-based distillation, we propose a novel Hierarchical Distillation (HD) framework. Complementing this framework, we designed the Adaptive Weighted Discriminator (AWD), a novel adversarial mechanism purpose-built to enhance its final generation quality.
    \item We demonstrate through extensive experiments that our method achieves state-of-the-art performance in single-step generation. Notably, it attains an FID of 2.26 on ImageNet $256\times256$, rivaling the 250-step teacher model, and also delivers highly competitive results on text-to-image synthesis benchmarks.
\end{itemize}

\section{Related Works}

\textbf{Trajectory Distillation (TD)} aims to train a student model to reproduce the multi-step PF-ODE trajectory of a teacher model in just one or a few steps. Early efforts in this domain primarily focused on discrete time steps. For instance, Progressive Distillation \cite{PGD} iteratively distills a two-step inference from the teacher into a single-step output for the student. Consistency Distillation (CD) \cite{CM,iCM} proposes learning a consistent mapping from noisy data at any timestep on the PF-ODE trajectory to the final clean image. Subsequent works such as PCM \cite{PCM}, Shortcut \cite{shortcut}, and CTM \cite{TCM} attempted to reduce the learning difficulty by introducing intermediate anchor points along the ODE trajectory. While improving performance, these methods still inherit the fundamental discretization error. To eliminate discretization errors and pursue maximum trajectory fidelity, research has shifted towards continuous-time trajectory distillation. Methods like \cite{sCM, FACM, sana_sprint} significantly boosted performance by modeling the distillation process in the continuous-time domain. Building on this, Meanflow \cite{Meanflow} further enhanced the stability and effectiveness of continuous TD by modeling the average velocity between two points on the PF-ODE, representing the state-of-the-art in the TD paradigm. However, we argue that all TD methods, including continuous TD, face an even more fundamental limitation in the minimal-step generation context: lossy information compression. Replicating a complex, multi-step trajectory with a single function is inherently lossy. In contrast to prior works that pursue perfect trajectory approximation, our approach embraces this inherent limitation and strategically repurposes the role of TD. Our insight is to use TD not as a final generator, but as a powerful initializer that provides a robust ``structural prior", setting the stage for a subsequent refinement process designed to overcome TD's intrinsic fidelity ceiling.

\textbf{Distribution-based Distillation} paradigm offers an alternative by forgoing the strict replication of a teacher's trajectory. Its core objective is to directly align the data distribution of the student's generated samples with that of the real samples. Early work \cite{ADD, LADD} had already explored using adversarial training to optimize the output distribution of a single-step generator. The Distribution Match Distillation (DMD) algorithm \cite{DMD} further developed this idea by estimating the score fields of both real and generated samples to minimize the KL-divergence between the ground truth distribution and generated distribution. To enhance efficiency, DMD2 \cite{DMD2} adopted adversarial training to eliminate the need for noise-data pairs regularization. Despite opening a new path for high-quality single-step generation, the distribution-based paradigm faces two core challenges: (1) The first challenge is mode collapse: the original DMD typically relies on a reverse-KL objective, which is highly effective at producing high-fidelity samples but is also notoriously prone to mode collapse \cite{DMD, ADM}. To mitigate this, researchers have adopted alternative metrics like JS-divergence \cite{fdiv_DMD} or Fisher divergence \cite{SiD} to improve training stability. (2) The second challenge is that the performance of DMD is highly contingent on the degree of overlap between the student model's initial distribution and the real data distribution. Current solutions are largely heuristic: they either inject noise into the generator's output \cite{DMD} or apply lightweight fine-tuning with LoRA \cite{magicdistillation}. Such strategies aim to broaden the initial distribution through randomness but fail to provide the student with any meaningful prior knowledge of the data manifold. In contrast, our work introduces a structured initialization strategy by repurposing a pre-trained TD model. This approach injects the rich structural knowledge accumulated from the teacher's multi-step inference directly into the student generator. As a result, the student begins the distribution matching from a high-quality, structurally sound position already close to the data manifold, effectively boosting training efficiency and model performance.

\section{Preliminary}



\subsection{Flow Matching with Mean Velocity}
\label{sec:fm_and_mf}

Flow Matching (FM) models~\cite{FlowMatching, FlowMatching2, FlowMatching3} learn a continuous transformation from a prior distribution $p_1$ (e.g., Gaussian noise) to a target data distribution $p_0$. This is achieved by training a neural network $\boldsymbol{v}_\theta(\boldsymbol{x}_t, t)$ to approximate a time-dependent vector field that defines the dynamics of a Probability Flow Ordinary Differential Equation (PF-ODE):
\begin{equation}
    \frac{d\boldsymbol{x}_t}{dt} = \boldsymbol{v}(\boldsymbol{x}_t, t), \quad t \in [0, 1].
    \label{eq:pf_ode}
\end{equation}
The core idea is to define a straight path between a data point $\boldsymbol{x}_0 \sim p_0$ and a noise point $\boldsymbol{x}_1 \sim p_1$, typically via linear interpolation: $\boldsymbol{x}_t = (1-t)\boldsymbol{x}_0 + t \boldsymbol{x}_1$. The ground-truth instantaneous velocity for this path is given by $\boldsymbol{v}(\boldsymbol{x}_t, t) = \boldsymbol{x}_1 - \boldsymbol{x}_0$. The model $\boldsymbol{v}_\theta$ is then trained by minimizing a regression loss:
\begin{equation}
    \mathcal{L}_{\text{FM}}(\theta) = \mathbb{E}_{t, \boldsymbol{x}_0, \boldsymbol{x}_1} \left[ \| \boldsymbol{v}_\theta(\boldsymbol{x}_t, t) - \boldsymbol{v}(\boldsymbol{x}_t, t) \|^2 \right].
    \label{eq:fm_objective}
\end{equation}
Unless otherwise specified, we adopt this linear interpolation path for all experiments in this paper.

After training, generating a sample requires integrating Eq.~\eqref{eq:pf_ode} from $t=1$ to $t=0$, which typically involves a multi-step numerical ODE solver. To enable efficient single-step generation, MeanFlow~\cite{Meanflow} proposed a more effective modeling target: the mean velocity $\boldsymbol{u}(\boldsymbol{x}_t, r, t)$. It is defined as the integral average of the instantaneous velocity over the time interval $[r, t]$:
\begin{equation}
    \boldsymbol{u}(\boldsymbol{x}_t, r, t) \triangleq \frac{1}{t-r} \int_r^t \boldsymbol{v}(\boldsymbol{x}_\tau, \tau) d\tau.
    \label{eq:mean_velocity_def}
\end{equation}
This definition directly provides a path for single-step generation. By setting the interval to $[0, 1]$, we can solve for $\boldsymbol{x}_0$:
\begin{equation}
    \boldsymbol{x}_0 = \boldsymbol{x}_1 - \boldsymbol{u}(\boldsymbol{x}_1, 0, 1).
    \label{eq:meanflow_inference}
\end{equation}
Therefore, if a model $\boldsymbol{u}_\theta$ can accurately predict the mean velocity over the full trajectory, inference reduces to a single function evaluation.

To train such a model $\boldsymbol{u}_\theta(\boldsymbol{x}_t, r, t)$, MeanFlow derives the ground-truth target from the instantaneous velocity. By differentiating Eq.~\eqref{eq:mean_velocity_def} with respect to $t$, one can establish the relation 
\begin{equation}
    \boldsymbol{u}(\boldsymbol{x}_t, r, t) = \boldsymbol{v}(\boldsymbol{x}_t, t) - (t-r) \frac{\partial \boldsymbol{u}(\boldsymbol{x}_t, r, t)}{\partial t}.
    \label{eq:mean_velocity_def2}
\end{equation}
 For stability, the training objective for $\boldsymbol{u}_\theta$ uses a stop-gradient on the target:
\begin{equation}
    \begin{gathered}
        \mathcal{L}_{\text{MF}}(\theta) = \mathbb{E}_{\boldsymbol{x}_0, \boldsymbol{x}_1, r, t} \left[ \| \boldsymbol{u}_\theta(\boldsymbol{x}_t, r, t) - \text{sg}(\boldsymbol{u}_{\text{tgt}}) \|_2^2 \right], \\
        \text{where} \quad \boldsymbol{u}_{\text{tgt}} = \boldsymbol{v}(\boldsymbol{x}_t, t) - (t-r) \frac{\partial \boldsymbol{u}(\boldsymbol{x}_t, r, t)}{\partial t}.
    \end{gathered}
    \label{eq:meanflow_loss}
\end{equation}

\subsection{Distribution Matching Distillation}

Distribution Matching Distillation (DMD)~\cite{DMD} aims to distill a multi-step teacher into single or few-step student models by minimizing the KL-divergence between the generated distribution $p_{\text{fake}}$ and the ground truth distribution $p_{\text{real}}$. However, the probability density functions of these two distributions are inaccessible. To circumvent this, DMD exploits the gradient of the KL-divergence to perform the optimization:
\begin{equation}
    \nabla_\theta \mathcal{D}_{\text{KL}}(p_{\text{fake}} \| p_{\text{real}}) = \mathbb{E}_{\hat{\boldsymbol{x}}_0 \sim p_{\text{fake}}} \left[ (\boldsymbol{s}_{\text{fake}}(\boldsymbol{x}_t) - \boldsymbol{s}_{\text{real}}(\boldsymbol{x}_t)) \nabla_\theta \hat{\boldsymbol{x}_t} \right],
    \label{eq:dmd_gradient_theoretic}
\end{equation}
where $\hat{\boldsymbol{x}}_0 = G_\theta(\boldsymbol{x}_1, t)$ is the clean data predicted by the student model, and $\boldsymbol{s}_{\text{real}}(\boldsymbol{x}) = \nabla_{\boldsymbol{x}} \log p_{\text{real}}(\boldsymbol{x})$, $\boldsymbol{s}_{\text{fake}}(\boldsymbol{x}) = \nabla_{\boldsymbol{x}} \log p_{\text{fake}}(\boldsymbol{x})$ are the score functions of the real and fake distributions, respectively. In practice, DMD employs the frozen teacher model $F_\phi$ to approximate $\boldsymbol{s}_{\text{real}}(\boldsymbol{x})$. Concurrently, it trains a separate ``fake" network $F_\psi$ to dynamically estimate the score of the generated distribution $\boldsymbol{s}_{\text{fake}}(\boldsymbol{x}_t)$. Empirically, the DMD loss is implemented via the stop gradient trick to perform equivalent gradient optimization:

\begin{equation}
    \begin{gathered}
        \mathcal{L}_{\mathbf{DMD}}(\theta)={\operatorname*{\mathbb{E}}}[\|\hat{\boldsymbol{x}}_0-\text{sg}(\hat{\boldsymbol{x}}_0-\boldsymbol{grad})\|_2^2], \\
        \text{where} \quad \boldsymbol{grad} = \frac{t (F_{\psi}(\boldsymbol{x}_t, t) - F_{\phi}(\boldsymbol{x}_t, t)) }{ ||\boldsymbol{G}_\theta(\boldsymbol{x}_t, t) - \boldsymbol{x}_t + tF_{\phi}||}
    \end{gathered}
    \label{eq:dmd_loss}
\end{equation}

\section{Method}
In this section, we present the technical details of our Hierarchical Distillation (HD) framework. We begin in Section \ref{sec:Theoretical Grounding} with a theoretical analysis that unifies mainstream Trajectory Distillation (TD) methods, revealing their shared limitations to motivate our approach. Following this, Section \ref{sec:stage1} details Stage 1 of our pipeline, where a MeanFlow-based TD phase instills a strong structural prior into the student model. Finally, Section \ref{sec:stage2} describes Stage 2, in which we apply distribution matching to this well-initialized model, refining it to achieve high-fidelity results.

\begin{figure*}[t]

   \centering
   \includegraphics[width=0.8\linewidth]{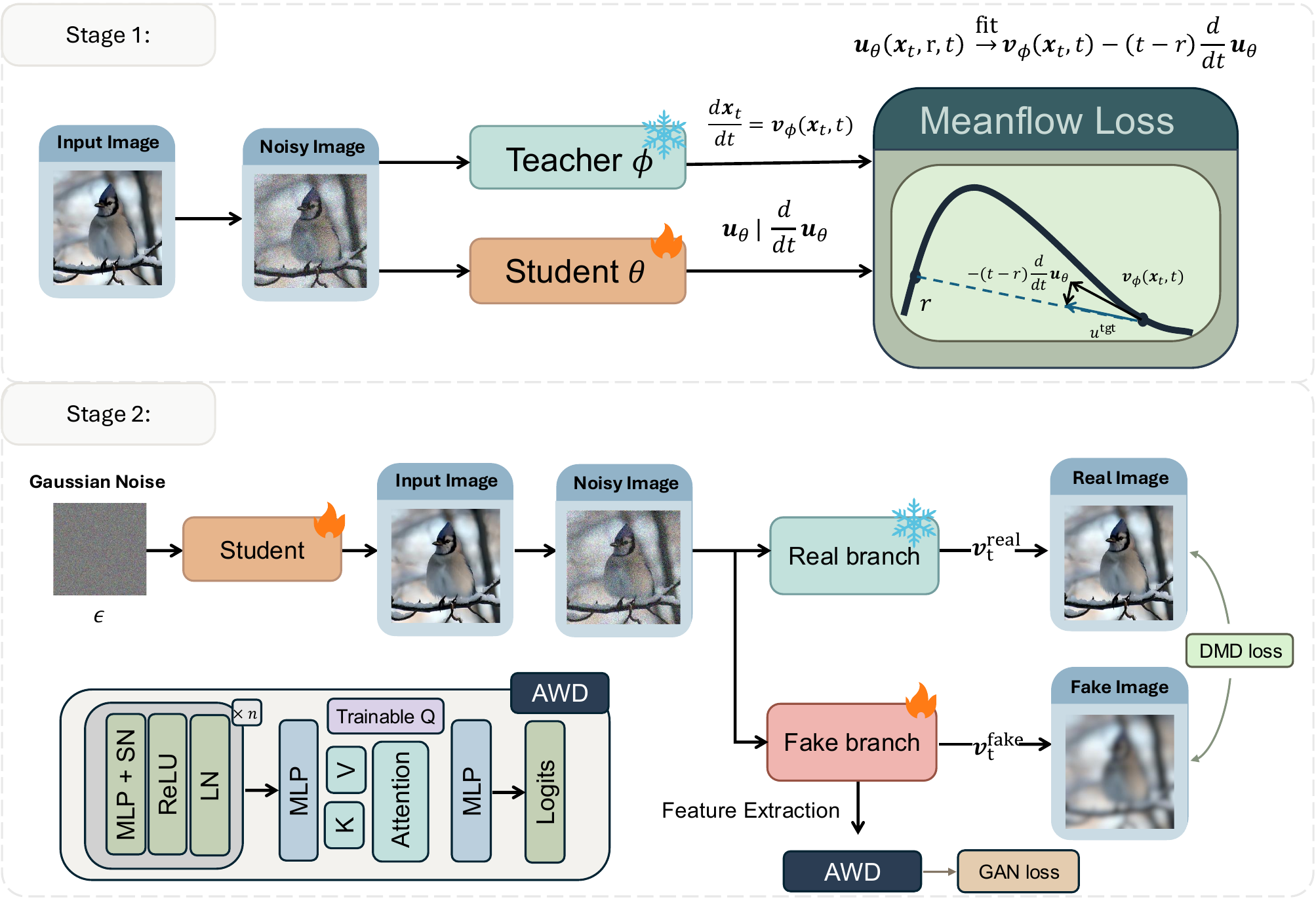}

   \caption{The Hierarchical Distillation (HD) Pipeline. Our method consists of two main stages: (1) Structured Initialization: A MeanFlow-based approach imbues the student with foundational structural information. (2) Distribution Refinement: A second stage restores high-frequency details, employing our Adaptive Weighted Discriminator (AWD) which was specifically designed for the HD framework. The ``SN" and ``LN" refer to spectral norm \cite{SN} and layer norm respectively.}
   \label{fig:pipeline}
\end{figure*}

\subsection{A Unified View of Trajectory Distillation}
\label{sec:Theoretical Grounding}
In this section, we conduct a theoretical analysis to elucidate the modeling target of Trajectory Distillation (TD). Through mathematical derivation, we demonstrate that the objectives of several mainstream TD methods, including Consistency Models (CM/sCM) \cite{CM, sCM} and Progressive Distillation (PGD) \cite{PGD}, can be unified under a common framework of Mean Velocity estimation. From this observation, we identify a common limitation inherent in most TD methods.

\begin{proposition}
Continuous Consistency Models implicitly model the mean velocity over the interval $[0, t]$.
\end{proposition}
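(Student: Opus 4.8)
The plan is to exhibit an explicit reparametrization of the consistency network under which its continuous-time training target coincides with the MeanFlow relation of Eq.~\eqref{eq:mean_velocity_def2} specialized to the left endpoint $r=0$; this makes the ``implicit'' mean-velocity model manifest. Recall that a continuous consistency model is a map $f_\theta(\boldsymbol{x}_t,t)$ trained to be invariant along PF-ODE trajectories --- in the continuous-time limit, the constraint $\tfrac{d}{dt}f_\theta(\boldsymbol{x}_t,t)=0$ with $\boldsymbol{x}_t$ solving Eq.~\eqref{eq:pf_ode} --- together with the boundary condition $f_\theta(\boldsymbol{x}_0,0)=\boldsymbol{x}_0$. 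Since for the linear-interpolation path $\boldsymbol{x}_0$ is obtained from $(\boldsymbol{x}_t,t)$ by subtracting a scaled velocity (cf.\ Eq.~\eqref{eq:meanflow_inference}), I would introduce the change of variables $f_\theta(\boldsymbol{x}_t,t) = \boldsymbol{x}_t - t\,\boldsymbol{u}_\theta(\boldsymbol{x}_t,t)$, equivalently $\boldsymbol{u}_\theta(\boldsymbol{x}_t,t) := \big(\boldsymbol{x}_t - f_\theta(\boldsymbol{x}_t,t)\big)/t$, and show $\boldsymbol{u}_\theta(\boldsymbol{x}_t,t) = \boldsymbol{u}(\boldsymbol{x}_t,0,t)$.

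Concretely I would proceed in three short steps. (i) Substitute the reparametrization into the consistency constraint and use $\tfrac{d\boldsymbol{x}_t}{dt}=\boldsymbol{v}(\boldsymbol{x}_t,t)$: a one-line computation gives $\tfrac{d}{dt}f_\theta = \boldsymbol{v}(\boldsymbol{x}_t,t) - \boldsymbol{u}_\theta(\boldsymbol{x}_t,t) - t\,\tfrac{d}{dt}\boldsymbol{u}_\theta(\boldsymbol{x}_t,t)$, so $\tfrac{d}{dt}f_\theta=0$ is exactly $\boldsymbol{u}_\theta(\boldsymbol{x}_t,t) = \boldsymbol{v}(\boldsymbol{x}_t,t) - t\,\tfrac{d}{dt}\boldsymbol{u}_\theta(\boldsymbol{x}_t,t)$, i.e.\ Eq.~\eqref{eq:mean_velocity_def2} with $r=0$ (the total derivative along the trajectory playing the role of the $\partial_t$ there). (ii) Rewrite this as $\tfrac{d}{dt}\!\big(t\,\boldsymbol{u}_\theta(\boldsymbol{x}_t,t)\big)=\boldsymbol{v}(\boldsymbol{x}_t,t)$; integrating from $0$ to $t$ along the trajectory and using the boundary condition (which forces $t\,\boldsymbol{u}_\theta\to 0$ as $t\to 0$) yields $t\,\boldsymbol{u}_\theta(\boldsymbol{x}_t,t)=\int_0^t \boldsymbol{v}(\boldsymbol{x}_\tau,\tau)\,d\tau$, hence $\boldsymbol{u}_\theta(\boldsymbol{x}_t,t)=\boldsymbol{u}(\boldsymbol{x}_t,0,t)$ by the definition in Eq.~\eqref{eq:mean_velocity_def}. (iii) As a sanity check, integrate the raw constraint directly: $f_\theta(\boldsymbol{x}_t,t)=f_\theta(\boldsymbol{x}_0,0)=\boldsymbol{x}_0$, so $\boldsymbol{u}_\theta=(\boldsymbol{x}_t-\boldsymbol{x}_0)/t$, which is again the time-average of $\boldsymbol{v}$ over $[0,t]$ by the fundamental theorem of calculus. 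The same template --- express the distillation target as a displacement over a time window, differentiate, and recognize Eq.~\eqref{eq:mean_velocity_def2} --- is then reused for discrete CM/sCM and PGD to complete the unified picture claimed before the proposition.

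I expect the principal difficulty to be notational/convention reconciliation rather than a deep step: the consistency-model literature is written in an EDM-style parametrization (noise at the large-$t$ end, $c_{\text{skip}}/c_{\text{out}}$ preconditioning, an $\epsilon$-truncated boundary), so I must first verify that the \emph{effective} on-trajectory output of the preconditioned network is still $\boldsymbol{x}_0$, making the substitution $f_\theta=\boldsymbol{x}_t-t\boldsymbol{u}_\theta$ well posed, and translate its time axis to the $[0,1]$ flow-matching convention used here. A related subtlety is that the practical sCM objective is not literally $\|\tfrac{d}{dt}f_\theta\|^2$ but a Jacobian--vector-product finite-difference surrogate with a tangent-warmup schedule; I would argue that its minimizer in the continuous-time limit satisfies the consistency relation $\tfrac{d}{dt}f_\theta=0$, and carefully carry the JVP term so that the ``$\partial_t$'' appearing in Eq.~\eqref{eq:mean_velocity_def2} is read as the total derivative along the PF-ODE, exactly as in the MeanFlow derivation of that identity.
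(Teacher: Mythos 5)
Your proposal is correct and follows essentially the same route as the paper's proof: the substitution $f_\theta(\boldsymbol{x}_t,t)=\boldsymbol{x}_t-t\,\boldsymbol{u}_\theta(\boldsymbol{x}_t,t)$ is exactly the paper's velocity-parametrized consistency constraint, your step (i) reproduces its differential relation $\boldsymbol{v}(\boldsymbol{x}_t,t)=\frac{\mathrm{d}}{\mathrm{d}t}\left[t\,F_\theta(\boldsymbol{x}_t,t)\right]$ (the paper merely reaches it from the discrete form by letting $\mathrm{d}t\to 0$), and your step (ii) is the same integration from $0$ to $t$ yielding the mean velocity over $[0,t]$. Your extra sanity check via $f_\theta(\boldsymbol{x}_t,t)=\boldsymbol{x}_0$ and the remarks on EDM-style preconditioning and the JVP surrogate are sensible additions but do not change the argument.
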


\begin{proof}
The core principle of Consistency Models ~\cite{CM} is to enforce consistency for the network output, denoted as $F_\theta(\boldsymbol{x}_t, t)$, along any given PF-ODE trajectory. The differential form of this consistency constraint can be expressed as (see Appendix \ref{appendix:math_proof} for a detailed derivation):
\begin{equation}
    \boldsymbol{x}_t - t \cdot F_{\theta}(\boldsymbol{x}_t, t) = \boldsymbol{x}_{t - \text{d}t} - (t - \text{d}t) \cdot F_{\theta}(\boldsymbol{x}_{t - \text{d}t}, t - \text{d}t),
    \label{eq:scm_discrete_and_continuous}
\end{equation}
where in the limit $\text{d}t \to 0$, this discrete relation yields the differential form:
\begin{equation}
    F_\theta(\boldsymbol{x}_t, t) + t \frac{dF_\theta(\boldsymbol{x}_t, t)}{dt} = \boldsymbol{v}(\boldsymbol{x}_t, t).
    \label{eq:scm_differential_1}
\end{equation}

Recall the relationship between instantaneous and mean velocity from Eq.~\eqref{eq:mean_velocity_def2}. Specifically, for the interval starting at $r=0$, it becomes:
\begin{equation}
     \boldsymbol{u}(\boldsymbol{x}_t, 0, t) + t \frac{\partial \boldsymbol{u}(\boldsymbol{x}_t, 0, t)}{\partial t} = \boldsymbol{v}(\boldsymbol{x}_t, t).
    \label{eq:v_u_relation_r0}
\end{equation}
This reveals that when $\text{d}t \to 0$, the CM network output $F_\theta(\boldsymbol{x}_t, t)$ is implicitly trained to model the mean velocity over the interval $[0, t]$, i.e., $F_\theta(\boldsymbol{x}_t, t) \triangleq \boldsymbol{u}(\boldsymbol{x}_t, 0, t)$.

\end{proof}

\begin{proposition}
Progressive Distillation (PGD) converges to modeling the mean velocity over the full interval $[0, T]$ as the number of distillation steps approaches infinity.
\end{proposition}

\begin{proof}
Progressive Distillation (PGD)~\cite{PGD} is an iterative process that distills a $2^k$-step teacher model into a $2^{k-1}$-step student model over $N$ rounds. In each round $k$, the student model $F_{\theta_k}$ is trained to predict the average of its teacher's ($F_{\theta_{k-1}}$) outputs over two consecutive timesteps.

After $N$ rounds of distillation, the output of the final single-step student model $F_\theta(\boldsymbol{x}_T, T)$ can be expressed as the arithmetic mean of the original multi-step teacher's outputs at $2^N$ discrete timesteps (see Appendix \ref{appendix:math_proof} for a detailed derivation):
\begin{equation}
    F_\theta(\boldsymbol{x}_T, T) = \frac{1}{2^N} \sum_{i=1}^{2^N} F_{\text{teacher}}\left(\boldsymbol{x}_{t_i}, t_i\right),
    \label{eq:pgd_sum}
\end{equation}
where $t_i = i \cdot T/2^N$ are the discrete timesteps. The original teacher model, $F_{\text{teacher}}$, is trained to approximate the instantaneous velocity, i.e., $F_{\text{teacher}}(\boldsymbol{x}_{t_i}, t_i) \approx \boldsymbol{v}(\boldsymbol{x}_{t_i}, t_i)$. As the number of distillation rounds $N \to \infty$, the sum converges to an integral:
\begin{equation}
    \lim_{N \to \infty} \frac{1}{2^N} \sum_{i=1}^{2^N} \boldsymbol{v}(\boldsymbol{x}_{t_i}, t_i) = \frac{1}{T} \int_0^T \boldsymbol{v}(\boldsymbol{x}_\tau, \tau) \, d\tau.
    \label{eq:pgd_integral}
\end{equation}
The right-hand side of Eq.~\eqref{eq:pgd_integral} is, by definition, the mean velocity $\boldsymbol{u}(\boldsymbol{x}_T, 0, T)$ over the entire interval $[0, T]$. This demonstrates that PGD also implicitly attempts to model the mean velocity.
\end{proof}

\textbf{Implications of the Unified View.} Our unifying analysis yields two critical insights that form the theoretical foundation of our proposed method. \textbf{(1)} Our proofs reveal that mainstream TD methods, regardless of their specific formulations, share a common essence: they train a single neural network $F_\theta$ to approximate a dynamic mean velocity function $\boldsymbol{u}(\boldsymbol{x}_t, r, t)$. As an integral over a trajectory segment, this mean velocity function encapsulates rich high-frequency dynamic information from the multi-step teacher's PF-ODE path. Forcing a student model of finite capacity to perfectly replicate a complex function in a single step introduces a fundamental bottleneck from an information-theoretic perspective. This provides a theoretical explanation for why all single-step TD methods inevitably suffer from a loss of fine-grained detail. We provide further empirical validation for this claim in Section~\ref{sec:toy_experiment}. \textbf{(2)} Because all TD methods share the fundamental objective of modeling mean velocity, we selected MeanFlow as their representative implementation. This decision is not merely based on its state-of-the-art performance. More importantly, MeanFlow explicitly and directly models the mean velocity, providing a mathematically elegant and robust implementation.

\subsection{Stage 1: Structured Initialization via TD}
\label{sec:stage1}

As previously discussed, applying Distribution Matching Distillation (DMD) from scratch for single-step generation faces training instability and mode collapse. A primary cause is the lack of overlap between the generation distribution and real data distribution \cite{DMD, magicdistillation}. To address this issue, we introduce a structured initialization stage. We utilize Trajectory Distillation (TD) to efficiently inject the rich structural priors accumulated by a multi-step teacher model into the student. This ensures that before the distribution matching phase even begins, the student model already possesses a strong capability to capture the macroscopic structure and layout of the target distribution. Based on the analysis in Section \ref{sec:Theoretical Grounding}, we adopt MeanFlow~\cite{Meanflow} as the distillation target for our TD stage. Although MeanFlow was originally proposed for training models from scratch, we argue that repurposing it as a distillation framework provides a lower-variance learning signal. When training from scratch, the model learns from random pairings of data and noise, where each sample presents a unique, high-variance target. In contrast, distillation leverages a pretrained teacher that has already converged to a fixed, deterministic mapping from noise to data. This guidance from the teacher ensures that the learning target is consistent during training, thereby lowering the variance of the gradient signal and leading to a more stable and efficient initialization phase.

The pipeline for this stage is illustrated in the upper part of Figure~\ref{fig:pipeline}. In our distillation framework, we replace the ground-truth instantaneous velocity field $\boldsymbol{v}_t$ (originally derived from linear interpolation in standard MeanFlow) with the output of a pre-trained teacher model $F_\phi$. This directly guides the student to learn the teacher's trajectory dynamics. Specifically, we define the instantaneous velocity field using Classifier-Free Guidance (CFG) on the teacher:
\begin{equation}
    v(\boldsymbol{x}_t, t) = (1+w)F_\phi(\boldsymbol{x}_t, t, c) - wF_\phi(\boldsymbol{x}_t, t, \emptyset),
    \label{eq:teacher_velocity}
\end{equation}
where $w$ is the guidance scale, and $c$ and $\emptyset$ represent the conditional and unconditional inputs, respectively. By substituting this teacher-defined velocity field into the MeanFlow training objective (Eq.~\eqref{eq:meanflow_loss}), we construct our distillation loss. The outcome of this stage is a student generator imbued with the teacher's structural prior. Although its minimal-step fidelity is imperfect, it provides a well-posed initialization for the subsequent distribution matching and refinement.



\subsection{Stage 2: Distribution Refinement}
\label{sec:stage2}

Initialized with the Stage 1 model $F_\theta$, the generator $\mathcal{G}_\theta$ then undergoes a second stage of distribution refinement, restoring high-frequency details that are inherently lost when learning solely from the teacher's trajectory.
We employ the DMD-based strategy to align the single-step output distribution of $\mathcal{G}_\theta$ with the real data distribution. Since the initial distribution of $\mathcal{G}_\theta$ already occupies a favorable region on the data manifold with significant overlap with the real distribution, the DMD training proceeds with greater stability and efficiency. Its primary task shifts from ``blind exploration" to ``targeted refinement of details". For the score networks within DMD, although the MeanFlow student itself can predict instantaneous velocity, we still use the pre-trained teacher $F_\phi$ to initialize both the real and fake score branches. This prevents potential error accumulation and provides a more accurate velocity field estimation. The loss function is formulated as Eq.~\eqref{eq:dmd_loss}.

To further stabilize the training and mitigate the risk of mode collapse, we introduce the adversarial training strategy. Instead of discriminating in the high-dimensional pixel space, we introduce a discriminator $\mathcal{D}$ that operates in the feature space of the teacher model $F_\phi$, following \cite{LADD}. 
The overall adversarial loss is composed of a generator and discriminator loss $\mathcal{L}_{\text{adv}}^{\mathcal{G}}$, $\mathcal{L}_{\text{adv}}^{\mathcal{D}}$:

\begin{flalign}
    &\mathcal{L}_{\text{adv}}^{\mathcal{G}} = -\mathbb{E}_{\boldsymbol{x}_1, t, c} \left[ \mathcal{D}(\phi(G_\theta(\boldsymbol{x}_1, t), t, c)) \right], & \label{eq:adv_g} \\
    &\begin{aligned}
        \mathcal{L}_{\text{adv}}^{\mathcal{D}} &= \mathbb{E}_{\boldsymbol{x}_0, t, c} \left[ \text{ReLU}(1 - \mathcal{D}(\phi(\boldsymbol{x}_t, t, c))) \right] \\
        &\quad + \mathbb{E}_{\boldsymbol{x}_1, t, c} \left[ \text{ReLU}(1 + \mathcal{D}(\phi(G_\theta(\boldsymbol{x}_1, t), t, c))) \right],
    \end{aligned} & \label{eq:adv_d}
\end{flalign}
where $\phi(\cdot)$ denotes the feature extraction function, which takes an image, time, and condition as input and returns the intermediate features from the teacher model $F_\phi$. Here, $\boldsymbol{x}_t$ is a noisy real image, while $G_\theta(\boldsymbol{x}_1, t)$ is a generated sample. Finally, the overall loss is formulated as:

\begin{equation}
    \mathcal{L} = \lambda_{1} \mathcal{L}_{\text{DMD}} + \lambda_{2}\mathcal{L}_{\text{adv}}^{\mathcal{G}} + \lambda_{3} \mathcal{L}_{\text{adv}}^{\mathcal{D}}.
    \label{eq:all_loss}
\end{equation}

\paragraph{Adaptive Weighted Discriminator.}
After TD initialization, the student model has captured the overall structure of the target distribution. Imperfections are no longer global but instead manifest as subtle, localized artifacts. This renders conventional discriminators \cite{sana_sprint, accvideo, LADD, DMD2, ADM}, which rely on global average pooling (GAP) largely ineffective. To address this challenge, we design the Adaptive Weighted Discriminator (AWD), as shown at the bottom of Figure \ref{fig:pipeline}. Rather than assigning uniform weights to all tokens, our discriminator employs a learnable query embedding and an attention mechanism to dynamically weigh different tokens on the feature map. Consequently, the discriminator can focus on local regions most likely to contain artifacts, providing more precise and effective gradients to the generator. 

The final student model, trained with this hierarchical framework, is capable of generating images in minimal-step that match the quality of the multi-step teacher while preserving diversity.


\section{Experiment}

\subsection{The Analysis of Bottleneck of Trajectory Distillation}

\label{sec:toy_experiment}

To empirically validate our theoretical claim regarding the information bottleneck in Trajectory Distillation (TD), we designed a 2D toy experiment. Our hypothesis is that distilling a multi-step teacher's trajectory into a single-step student model is fundamentally an act of information compression, and its performance is inherently limited by the student's model capacity.

\paragraph{Experimental Setup.}
We define a mapping task from a prior distribution $p_1$ to a target distribution $p_0$.
\begin{itemize}
    \item The \textbf{prior distribution $p_1$} is a uniform distribution over a rectangle:
    \begin{equation}
        p_1(z) = \mathcal{U}\left(\{ (x, y) \mid x \in [-2, 2], y \in [-2, 0] \}\right).
    \end{equation}
    \item The \textbf{target distribution $p_0$} is the upper semi-circle of a unit circle:
    \begin{equation}
        p_0(z) = \{ (\cos\theta, \sin\theta) \mid \theta \in [0, \pi] \}.
    \end{equation}
    \item The \textbf{generator} is a multi-layer perceptron (MLP) with residual connections.
    \item The \textbf{metric} for evaluation is the Euclidean distance from a generated point $z_{\text{gen}} = (x, y)$ to the unit circle, defined as $d(z_{\text{gen}}) = \left| \sqrt{x^2 + y^2} - 1 \right|$. Lower distance means better performance. 
\end{itemize}

First, we train a teacher model using the standard Flow Matching objective to learn the mapping from $p_1$ to $p_0$. After convergence, we distill it into a single-step student model using MeanFlow as the TD implementation. To investigate the effect of model capacity, we vary the depth of the MLP for both the teacher and student, creating model variants denoted from ``S" to ``XXXL" (see Appendix \ref{appendix:toy_experiment_details} for detailed hyperparameters). We then compare the performance of the teacher using 50 inference steps against the student's single-step generation.

We illustrate the correlation between model capacity and metric in Figure \ref{fig:toy_exp_metric}, and provide a visualization of the generation results of TD student and flow matching teacher (Figure \ref{fig:toy_exp_TD} and Figure \ref{fig:toy_exp_teacher} in Appendix). The results reveal two key findings. (1) As model capacity increases, the performance of the multi-step teacher improves only marginally. In contrast, the performance of the single-step student model improves significantly with increased capacity. This directly supports our hypothesis: when the model capacity is insufficient, forcing a single-step model to compress the teacher's multi-step trajectory inevitably leads to a loss of information and degraded performance. The effectiveness of TD is thus highly dependent on the student model having sufficient capacity spare. (2) Even when the model capacity is increased by over $50\times$, the state-of-the-art TD method (MeanFlow) still fails to perfectly replicate the teacher's trajectory in a single step.

This experiment provides compelling empirical evidence that TD alone is insufficient to achieve optimal single-step generation quality. While the distilled student can capture the general structure of the trajectory, it struggles to preserve the fine-grained details, necessitating a subsequent refinement stage to close the performance gap.

\begin{figure}
    \centering
    \includegraphics[width=0.9\linewidth]{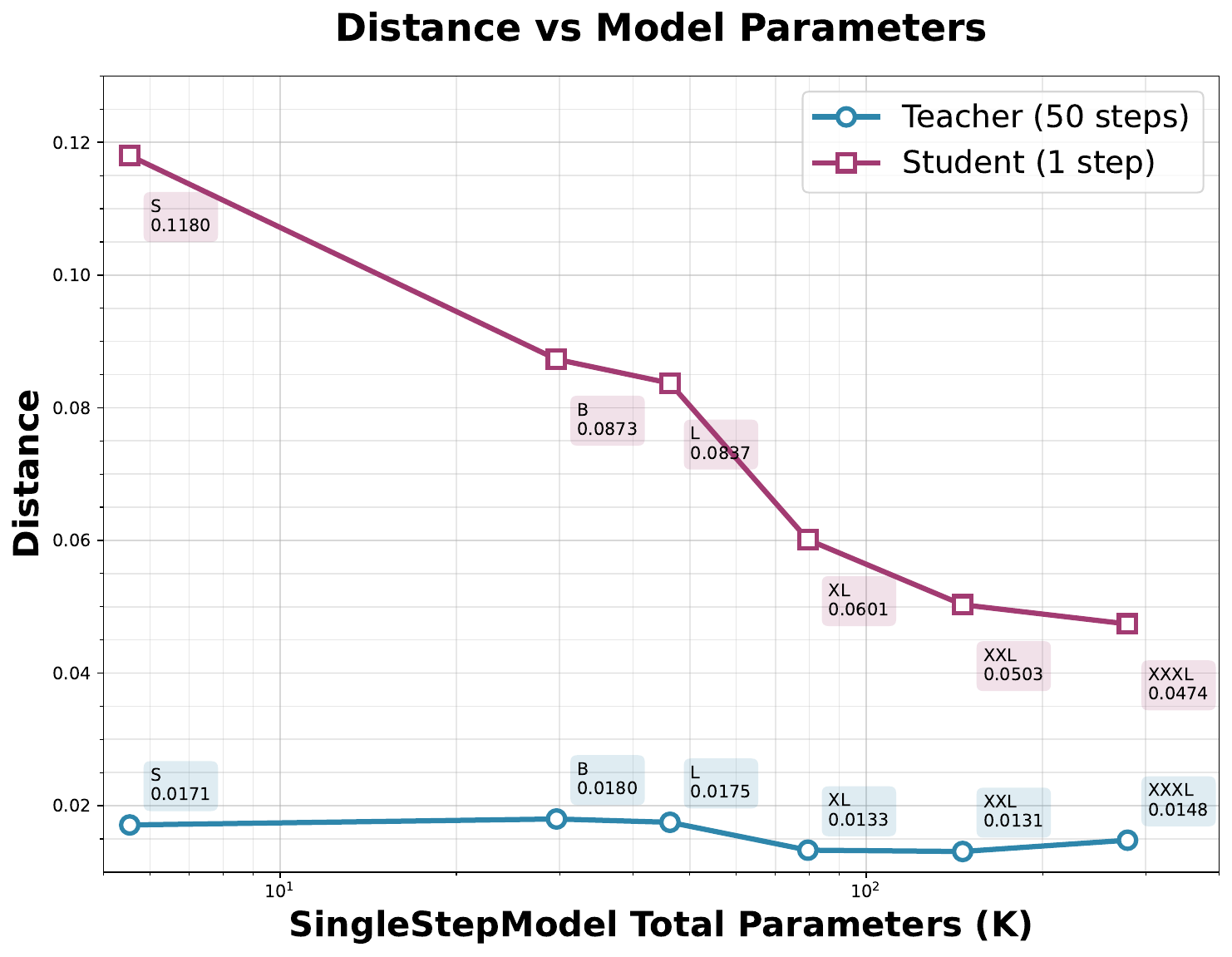}
    \caption{Performance of Trajectory Distillation (TD) vs. Model Size. The upper bound of TD performance increases with the number of model parameters. }
    \label{fig:toy_exp_metric}
\end{figure}

\subsection{Comparison Study}
This section empirically validates the efficacy of our proposed method, demonstrating its ability to achieve state-of-the-art single-step generation quality on both ImageNet 256×256 and text-to-image synthesis. Our primary goal is to drastically reduce inference latency while preserving the high-fidelity output of the original multi-step teacher models. For the ImageNet experiments, we build upon the SiT \cite{SiT} architecture. Following the evaluation procedure of \cite{MAR}, we assess generative performance on 50,000 images. For the text-to-image task, we employ the 0.6B-parameter SANA model \cite{SANA} and evaluate on the MJHQ-30K dataset \cite{MJHQ}. Performance is measured using the following metrics: FID \cite{FID} for perceptual quality and CLIP Score \cite{CLIP} for text-image alignment. In all ImageNet experiments, the Classifier-Free Guidance (CFG) scale is fixed at 1.75 across all timesteps. For the text-to-image task, the CFG scale is fixed at 4.5 following \cite{sana_sprint}. We examine the inference cost by the number of function evaluations (NFE). As detailed in Table \ref{tab:image_net_256}, our method demonstrates remarkable performance. Crucially, in a single generation step, our method achieves an FID of 2.26 on ImageNet, a result that is not only state-of-the-art for one-step models but also comparable to the 250-step teacher model. 
We provide a detailed visual comparison with the multi-step teacher in Figure \ref{fig:qualitative_imagenet} of the Appendix. 
Similarly, in the text-to-image task, as shown in Table \ref{tab:t2i}, our single-step model achieves performance close to the 50-step teacher and outperforms existing step-distillation methods in generation quality. Figure~\ref{fig:qualitative_comparison} presents a qualitative comparison with previous state-of-the-art (SOTA) methods. The results demonstrate that our approach achieves superior performance in preserving both global structural integrity and fine-grained details. Collectively, these results underscore the exceptional capability of our HD method, demonstrating its ability to achieve promising results across a wide range of generative tasks.

\begin{table}[h]
\centering
\caption{Class-conditional generation on ImageNet 256×256. }
\small
\begin{tabular}{lccc}
\toprule
\textbf{Method} & \textbf{Params} & \textbf{NFE} & \textbf{FID ($\downarrow$)} \\
\midrule
\multicolumn{4}{l}{\textbf{\textit{Multi-NFE Baselines}}} \\
SiT-XL/2\cite{SiT}        & 675M & 250×2 & 2.06 \\
DiT-XL/2\cite{DiT}        & 675M & 250×2 & 2.27 \\
SiT-XL/2+REPA\cite{REPA}   & 675M & 250×2 & 1.42 \\
SiT-XL/2 (CFG = 1.75)        & 675M & 250×2 & 2.27 \\
\midrule
\multicolumn{4}{l}{\textbf{\textit{Single-NFE Methods (NFE=1)}}} \\
iCT\cite{iCM}     & 675M & 1     & 34.24 \\
Shortcut\cite{shortcut}    & 675M & 1     & 10.60 \\
IMM\cite{IMM}     & 675M & 1          & 7.77 \\
DMD\cite{DMD}          & 675M & 1     & 6.63 \\
sCM\cite{sCM}          & 675M & 1     & 4.69 \\
FACM\cite{FACM}          & 675M & 1     & 3.68 \\
MeanFlow\cite{Meanflow}     & 676M & 1     & 3.43 \\
Meanflow (distill) & 676M & 1     & 3.62 \\
HD(ours)     & 676M & 1     & 2.26 \\

\bottomrule
\end{tabular}
\label{tab:image_net_256}
\end{table}

\begin{table}[h]
\centering
\caption{Text-to-Image generation on MJHQ-30K dataset.}
\small
\begin{tabular}{lccc}
\toprule
\textbf{Method} & \textbf{NFE} & \textbf{FID ($\downarrow$)} & \textbf{CLIP ($\uparrow$)}\\
\midrule
\multicolumn{4}{l}{\textbf{\textit{Multi-NFE Baselines} } } \\
SDXL\cite{SDXL}        &50×2 & 6.63 & 29.03\\
SD3-medium\cite{SD3-m}   &28×2 & 11.92 & 27.83\\
FLUX-dev\cite{flux}     &50×2 & 10.15 & 27.47\\
SANA \cite{SANA}       &50×2 & 5.76 & 28.67\\
\midrule
\multicolumn{4}{l}{\textbf{\textit{Few-NFE Methods}}} \\
SDXL-LCM \cite{LCM}     & 2     & 18.11 & 27.51\\
PCM\cite{PCM}    & 2     & 14.70 & 27.66\\
SANA-DMD2\cite{DMD2}     & 2   & 9.69  & 27.85\\
SANA-sprint\cite{sana_sprint}   & 2 &  8.41 & 28.18\\
SANA-meanflow\cite{Meanflow} & 2   &  7.53&  28.18\\
SANA-HD(ours)     & 2    & \textbf{7.22} & \textbf{28.32}\\
\midrule
\multicolumn{4}{l}{\textbf{\textit{Single-NFE Methods}}}\\
SDXL-LCM\cite{LCM}     & 1     & 50.51 & 24.45\\
PCM\cite{PCM}    & 1     & 30.11 & 26.47\\
SANA-DMD2\cite{DMD2}     & 1   & 8.45 & 27.95\\
SANA-sprint\cite{sana_sprint}   & 1     & 10.27 & 27.90\\
SANA-meanflow\cite{Meanflow} & 1     & 8.17 & 28.03\\

SANA-HD(ours)     & 1     &\textbf{ 7.47} &  \textbf{28.14}\\

\bottomrule
\end{tabular}
\label{tab:t2i}
\end{table}

\begin{figure}
    \centering
    \includegraphics[width=\linewidth]{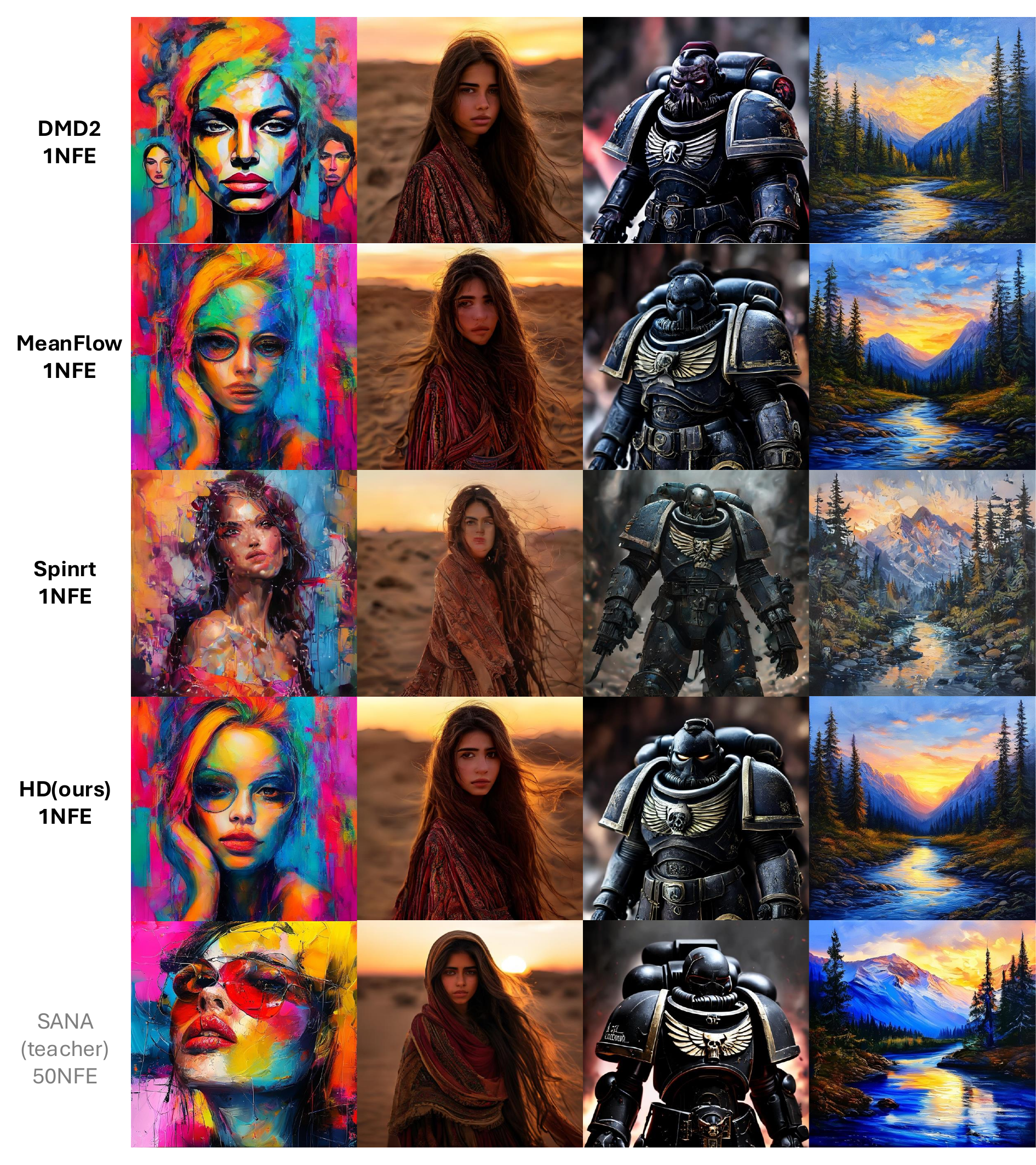}
    \caption{Qualitative comparisons with previous methods based on fully fine-tuned SANA \cite{SANA}.}
    \label{fig:qualitative_comparison}
\end{figure}

\subsection{Ablation Study}
To investigate the effectiveness of our proposed Hierarchical Distillation (HD) framework, we conduct a series of ablation studies on ImageNet to quantify the contribution of each component. As summarized in Table \ref{tab:ablation}, our findings reveal that both stages of the HD framework are crucial for achieving optimal performance. First, we validate the efficacy of the Trajectory Distillation (TD) initialization strategy. When comparing models trained with and without TD initialization while keeping the subsequent DMD and adversarial distillation steps identical, the TD-initialized model demonstrates a significant performance advantage. This confirms that establishing a strong initial trajectory is a critical first step for the student model. While adversarial training is known to mitigate mode collapse of DMD \cite{DMD}, we found the standard adversarial setup to be suboptimal within our HD framework. The ablation study shows that our proposed Attention-Weighted Discriminator (AWD) considerably boosts performance, validating its effectiveness in guiding the student to better align its output distribution with the target.


\begin{table}[t]
\centering
\caption{Ablation study results on ImageNet $256\times256$.}
\small
\begin{tabular}{lcc}
\toprule
\textbf{Method} & \textbf{NFE} & \textbf{FID ($\downarrow$)} \\
\midrule
SiT-XL/2 & 250×2 & 2.27 \\
\midrule
Meanflow (distillation) & 1 & 3.62 \\
DMD & 1 & 6.63 \\
DMD + GAN (GAP) & 1 & 5.49 \\
Meanflow + DMD & 1 & 3.39 \\
Meanflow + DMD + GAN (GAP) & 1 & 3.09 \\
Meanflow + DMD + GAN (AWD) & 1 & \textbf{2.26} \\
\bottomrule
\end{tabular}
\label{tab:ablation}
\end{table}

\section{Conclusion}
In this paper, we present an unified theoretical formulation for Trajectory Distillation (TD), identifying a shared ``average velocity" modeling objective that leads to an information-theoretic bottleneck. This analysis reveals why TD excels at global structures but fundamentally struggles with fine-grained details. Motivated by this insight, we propose a novel Hierarchical Distillation (HD) framework that synergistically combines TD and Distribution Matching. Our method first leverages TD as a powerful initializer to instill rich structural priors from the teacher model, establishing a well-posed starting point for the student. Subsequently, we refine this strong initial model via distribution matching. To enhance this stage, we introduce a tailored adversarial training process with our proposed Adaptive Weighted Discriminator (AWD). By dynamically focusing on the local artifacts of well-initialized models, it provides more precise guidance for detail refinement. Extensive experiments demonstrate that our single-step student model significantly outperforms existing distillation methods and achieves fidelity comparable to its multi-step teacher. By diagnosing and overcoming the bottleneck of TD, our work presents an effective new paradigm for few-step and even single-step high-fidelity generation.
{
    \small
    \bibliographystyle{ieeenat_fullname}
    \bibliography{main}
}

\clearpage
\setcounter{page}{1}
\maketitlesupplementary

\section*{Equivalence of Mean Velocity Estimation and Trajectory Distillation}
\label{appendix:math_proof}
In this section, we provide a detailed proof demonstrating that mainstream trajectory distillation methods are essentially targeted at mean velocity estimation. We discuss this under ideal circumstances (i.e., the model can perfectly fit the training objectives).

\setcounter{proposition}{0}
\begin{proposition}
Continuous Consistency Models (sCM) implicitly model the mean velocity over the interval $[0, t]$.
\end{proposition}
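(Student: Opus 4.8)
The plan is to reproduce the two displayed steps of the sketch in full detail and then add the one ingredient the main-text argument leaves implicit: a uniqueness argument that actually identifies $F_\theta$ with $\boldsymbol{u}(\boldsymbol{x}_t, 0, t)$, rather than merely noting that the two satisfy the same differential relation. Throughout I would work under the stated ideal assumption that $F_\theta$ satisfies the self-consistency constraint exactly along every PF-ODE trajectory, and parametrize a fixed trajectory by $t$, writing $\boldsymbol{x}_t$ for the solution of Eq.~\eqref{eq:pf_ode}.

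\textbf{Step 1 (discrete to differential).} Starting from the self-consistency identity $\boldsymbol{x}_t - t F_\theta(\boldsymbol{x}_t, t) = \boldsymbol{x}_{t-\text{d}t} - (t-\text{d}t)F_\theta(\boldsymbol{x}_{t-\text{d}t}, t-\text{d}t)$, I would Taylor-expand the right-hand side to first order in $\text{d}t$. Along the PF-ODE, $\boldsymbol{x}_{t-\text{d}t} = \boldsymbol{x}_t - \boldsymbol{v}(\boldsymbol{x}_t, t)\,\text{d}t + O(\text{d}t^2)$, and writing $\frac{d}{dt}$ for the total derivative of $F_\theta$ along the trajectory, $F_\theta(\boldsymbol{x}_{t-\text{d}t}, t-\text{d}t) = F_\theta(\boldsymbol{x}_t, t) - \frac{dF_\theta}{dt}\,\text{d}t + O(\text{d}t^2)$. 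Substituting, the $O(1)$ terms $\boldsymbol{x}_t - tF_\theta(\boldsymbol{x}_t,t)$ cancel on both sides, and collecting the $O(\text{d}t)$ terms leaves $\boldsymbol{v}(\boldsymbol{x}_t, t)\,\text{d}t - F_\theta(\boldsymbol{x}_t, t)\,\text{d}t - t\frac{dF_\theta}{dt}\,\text{d}t = O(\text{d}t^2)$. Dividing by $\text{d}t$ and letting $\text{d}t\to 0$ yields exactly Eq.~\eqref{eq:scm_differential_1}, namely $F_\theta(\boldsymbol{x}_t, t) + t\frac{dF_\theta(\boldsymbol{x}_t,t)}{dt} = \boldsymbol{v}(\boldsymbol{x}_t, t)$.

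\textbf{Step 2 (integration and uniqueness).} I would then observe that the left side of Eq.~\eqref{eq:scm_differential_1} is a perfect derivative, $F_\theta + t\frac{dF_\theta}{dt} = \frac{d}{dt}\!\left(t\,F_\theta(\boldsymbol{x}_t,t)\right)$, so $\frac{d}{dt}\!\left(t\,F_\theta(\boldsymbol{x}_t,t)\right) = \boldsymbol{v}(\boldsymbol{x}_t,t)$. Integrating from $0$ to $t$ gives $t\,F_\theta(\boldsymbol{x}_t,t) - \lim_{s\to 0^+} s\,F_\theta(\boldsymbol{x}_s,s) = \int_0^t \boldsymbol{v}(\boldsymbol{x}_\tau,\tau)\,d\tau$. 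The consistency-model boundary condition forces $F_\theta(\boldsymbol{x}_s,s)$ to stay bounded as $s\to 0$ (its one-step prediction $\boldsymbol{x}_s - s\,F_\theta$ must converge to the clean sample $\boldsymbol{x}_0$), so the boundary term vanishes and $F_\theta(\boldsymbol{x}_t,t) = \frac{1}{t}\int_0^t \boldsymbol{v}(\boldsymbol{x}_\tau,\tau)\,d\tau = \boldsymbol{u}(\boldsymbol{x}_t,0,t)$ by the definition in Eq.~\eqref{eq:mean_velocity_def}. Equivalently, since $\boldsymbol{u}(\boldsymbol{x}_t,0,t)$ satisfies the same linear relation Eq.~\eqref{eq:v_u_relation_r0} with the same $s\to 0$ limit $\boldsymbol{v}(\boldsymbol{x}_0,0)$, one may instead invoke uniqueness of the solution of this first-order ODE in $t$.

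\textbf{Main obstacle.} The delicate points are bookkeeping and the boundary term, not the algebra. First, I must keep $\frac{dF_\theta}{dt}$ throughout as the \emph{total} derivative along the trajectory (the JVP form $\partial_t F_\theta + (\nabla_{\boldsymbol{x}} F_\theta)\,\boldsymbol{v}$), matching the convention under which Eq.~\eqref{eq:mean_velocity_def2} and Eq.~\eqref{eq:v_u_relation_r0} are written; silently swapping in the partial $\partial_t$ would break the identification. Second, the limit $\lim_{s\to 0^+} s\,F_\theta(\boldsymbol{x}_s,s) = 0$ has to be argued from the consistency boundary condition rather than assumed, since the whole identification of $F_\theta$ with $\boldsymbol{u}(\cdot,0,\cdot)$ — as opposed to $\boldsymbol{u}(\cdot,0,\cdot)$ plus a spurious $C/t$ term — rests on it. Once those two conventions are pinned down, the passage from the discrete constraint to the claimed mean-velocity identity is immediate.
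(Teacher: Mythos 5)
Your proposal is correct and follows essentially the same route as the paper's appendix proof: pass from the discrete consistency identity to the differential relation $F_\theta + t\,\frac{dF_\theta}{dt} = \boldsymbol{v}$, recognize the left side as $\frac{d}{dt}\bigl(t\,F_\theta(\boldsymbol{x}_t,t)\bigr)$, and integrate from $0$ to $t$ to obtain $F_\theta(\boldsymbol{x}_t,t) = \frac{1}{t}\int_0^t \boldsymbol{v}(\boldsymbol{x}_\tau,\tau)\,d\tau = \boldsymbol{u}(\boldsymbol{x}_t,0,t)$. Your explicit justification that the boundary term $\lim_{s\to 0^+} s\,F_\theta(\boldsymbol{x}_s,s)$ vanishes (which the paper handles implicitly by evaluating $\tau\,F_\theta$ at $\tau=0$) and your insistence on the total-derivative convention are welcome refinements, but they do not change the argument.
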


\begin{proof}
The self-consistency method (sCM) is defined by the consistency property, which states that for any time $t$ and an infinitesimal step $\mathrm{d}t$, the model's prediction of the initial state $x_0$ should be consistent. Let $F_\theta(x_t, t)$ be the neural network that predicts the velocity, such that the estimated initial state is $x_0 \approx x_t - t \cdot F_\theta(x_t, t)$. The consistency constraint is thus formulated as:
\begin{equation}
    \boldsymbol{x}_t - t \cdot F_\theta(\boldsymbol{x}_t, t) = \boldsymbol{x}_{t-\mathrm{d}t} - (t-\mathrm{d}t) \cdot F_\theta(\boldsymbol{x}_{t-\mathrm{d}t}, t-\mathrm{d}t)
    \label{eq:scm_consistency}
\end{equation}
We begin by rearranging Equation \eqref{eq:scm_consistency} to isolate the terms related to the state change $\boldsymbol{x}_t - \boldsymbol{x}_{t-\mathrm{d}t}$:
\begin{equation}
    \boldsymbol{x}_t - \boldsymbol{x}_{t-\mathrm{d}t} = t \cdot F_\theta(\boldsymbol{x}_t, t) - (t-\mathrm{d}t) \cdot F_\theta(\boldsymbol{x}_{t-\mathrm{d}t}, t-\mathrm{d}t)
\end{equation}
We can rewrite the right-hand side as:
\begin{align}
    \begin{split}
    \boldsymbol{x}_t - \boldsymbol{x}_{t-\mathrm{d}t} &= t \left[ F_\theta(\boldsymbol{x}_t, t) - F_\theta(x_{t-\mathrm{d}t}, t-\mathrm{d}t) \right] \\
    &\quad + \mathrm{d}t \cdot F_\theta(\boldsymbol{x}_{t-\mathrm{d}t}, t-\mathrm{d}t)
    \end{split}
\end{align}
Now, we divide the entire equation by $\mathrm{d}t$:
\begin{equation}
    \begin{split}
    \frac{\boldsymbol{x}_t - \boldsymbol{x}_{t-\mathrm{d}t}}{\mathrm{d}t} &= t \cdot \frac{F_\theta(\boldsymbol{x}_t, t) - F_\theta(\boldsymbol{x}_{t-\mathrm{d}t}, t-\mathrm{d}t)}{\mathrm{d}t} \\
    &\quad + F_\theta(x_{t-\mathrm{d}t}, t-\mathrm{d}t)
    \end{split}
\end{equation}
Taking the limit as $\mathrm{d}t \to 0$, the terms become their continuous-time derivatives:
\begin{itemize}
    \item $\lim\limits_{\mathrm{d}t \to 0} \frac{\boldsymbol{x}_t - \boldsymbol{x}_{t-\mathrm{d}t}}{\mathrm{d}t} = \frac{\mathrm{d}x_t}{\mathrm{d}t}$, which is the instantaneous velocity, denoted as $\boldsymbol{v}(\boldsymbol{x}_t, t)$.
    \item $\lim\limits_{\mathrm{d}t \to 0} \frac{F_\theta(\boldsymbol{x}_t, t) - F_\theta(\boldsymbol{x}_{t-\mathrm{d}t}, t-\mathrm{d}t)}{\mathrm{d}t} = \frac{\mathrm{d}}{\mathrm{d}t}F_\theta(\boldsymbol{x}_t, t)$, which is the total derivative of the network output with respect to time.
    \item $\lim\limits_{\mathrm{d}t \to 0} F_\theta(x_{t-\mathrm{d}t}, t-\mathrm{d}t) = F_\theta(\boldsymbol{x}_t, t)$, assuming continuity.
\end{itemize}
Substituting these into the equation, we arrive at the differential form of the consistency constraint:
\begin{equation}
    \boldsymbol{v}(\boldsymbol{x}_t, t) = t \frac{\mathrm{d}}{\mathrm{d}t}F_\theta(\boldsymbol{x}_t, t) + F_\theta(\boldsymbol{x}_t, t)
    \label{eq:scm_differential}
\end{equation}
This can be rewritten as:
\begin{equation}
    \boldsymbol{v}(\boldsymbol{x}_t, t) = \frac{\mathrm{d}}{\mathrm{d}t} \left[ t \cdot F_\theta(\boldsymbol{x}_t, t) \right]
\end{equation}
Integrating both sides from $0$ to $t$ with respect to a dummy variable $\tau$:
\begin{equation}
    \begin{split}
    \int_0^t \boldsymbol{v}(\boldsymbol{x}_\tau, \tau) \, \mathrm{d}\tau &= \int_0^t \frac{\mathrm{d}}{\mathrm{d}\tau} \left[ \tau \cdot F_\theta(\boldsymbol{x}_\tau, \tau) \right] \, \mathrm{d}\tau \\
    &= \left[ \tau \cdot F_\theta(\boldsymbol{x}_\tau, \tau) \right]_0^t \\
    &= t \cdot F_\theta(\boldsymbol{x}_t, t) - 0 \cdot F_\theta(\boldsymbol{x}_0, 0) \\
    &= t \cdot F_\theta(\boldsymbol{x}_t, t)
    \end{split}
\end{equation}
    
Finally, solving for $F_\theta(\boldsymbol{x}_t, t)$, we get:
\begin{equation}
    F_\theta(\boldsymbol{x}_t, t) = \frac{1}{t} \int_0^t \boldsymbol{v}(\boldsymbol{x}_\tau, \tau) \, \mathrm{d}\tau
    \label{eq:scm_mean_velocity}
\end{equation}
This equation explicitly shows that the quantity $F_\theta(\boldsymbol{x}_t, t)$ modeled by a continuous consistency model is precisely the average (mean) of the instantaneous velocity $\boldsymbol{v}(\boldsymbol{x}_\tau, \tau)$ over the time interval $[0, t]$.

\end{proof}

\begin{proposition}
Progressive Distillation (PGD) converges to modeling the mean velocity over the full interval $[0, T]$ as the number of distillation steps approaches infinity.
\end{proposition}
\begin{proof}
Let $F_{\theta_0} \equiv F_{\text{teacher}}$. The model $F_{\theta_k}$ is a $2^{N-k}$-step model. The core idea of PGD is that the student model at round $k$ is trained to match the average of two consecutive steps of its teacher model from round $k-1$. The distillation recurrence relation can be expressed as:
\begin{equation}
    F_{\theta_k}(\cdot, t_{j \cdot 2^k}) = \frac{1}{2} \left( F_{\theta_{k-1}}(\cdot, t_{j \cdot 2^k}) + F_{\theta_{k-1}}(\cdot, t_{(j-1/2) \cdot 2^k}) \right),
    \label{eq:pgd_recurrence}
\end{equation}
where $t_m = m \cdot T/2^N$ and $j$ is an integer indexing the time intervals ($j = 1, \dots, 2^{N-k}$).

\textbf{Base Case ($k=1$):}
For the first round of distillation, we train a $2^{N-1}$-step model $F_{\theta_1}$ from the $2^N$-step teacher $F_{\theta_0}$. According to the PGD training objective, for any time interval $[t_{(j-1) \cdot 2}, t_{j \cdot 2}]$, the student's output is the average of the teacher's outputs at two steps. This directly gives:
\begin{equation}
    F_{\theta_1}(\cdot, t_{2j}) = \frac{1}{2} \left( F_{\theta_0}(\cdot, t_{2j}) + F_{\theta_0}(\cdot, t_{2j-1}) \right).
\end{equation}
This matches the form of our proposition for $k=1$.

\textbf{Inductive Hypothesis:}
Assume that after $k-1$ rounds of distillation, the proposition holds. That is, the output of the $2^{N-(k-1)}$-step model $F_{\theta_{k-1}}$ is the arithmetic mean of the original teacher's outputs over $2^{k-1}$ corresponding time steps:
\begin{equation}
    F_{\theta_{k-1}}(\cdot, t_{j \cdot 2^{k-1}}) = \frac{1}{2^{k-1}} \sum_{i=1}^{2^{k-1}} F_{\theta_0}\left(\cdot, t_{(j-1) \cdot 2^{k-1} + i}\right).
    \label{eq:inductive_hypothesis}
\end{equation}
\textbf{Inductive Step:}
We now prove that the proposition holds for round $k$. We train the $2^{N-k}$-step model $F_{\theta_k}$ using the recurrence relation from Eq. \eqref{eq:pgd_recurrence}. We substitute our inductive hypothesis (Eq. \eqref{eq:inductive_hypothesis}) into this relation.
(1) The first term, $F_{\theta_{k-1}}(\cdot, t_{j \cdot 2^k})$, corresponds to the time interval starting from $t_{(j-1/2) \cdot 2^k}$. Applying the hypothesis, we get:
\begin{equation}
    \begin{split}
    F_{\theta_{k-1}}(\cdot, t_{j \cdot 2^k}) &= \frac{1}{2^{k-1}} \sum_{i=1}^{2^{k-1}} F_{\theta_0}\left(\cdot, t_{(j \cdot 2^k - 2^{k-1}) + i}\right) \\
    &= \frac{1}{2^{k-1}} \sum_{i=(j-1/2) \cdot 2^k + 1}^{j \cdot 2^k} F_{\theta_0}(\cdot, t_i).
    \end{split}
\end{equation}
(2) The second term, $F_{\theta_{k-1}}(\cdot, t_{(j-1/2) \cdot 2^k})$, corresponds to the time interval starting from $t_{(j-1) \cdot 2^k}$. Applying the hypothesis again:
\begin{equation}
    \begin{split}
    F_{\theta_{k-1}}(\cdot, t_{(j-1/2) \cdot 2^k}) &= \frac{1}{2^{k-1}} \sum_{i=1}^{2^{k-1}} F_{\theta_0}\left(\cdot, t_{(j-1) \cdot 2^k + i}\right) \\
    &= \frac{1}{2^{k-1}} \sum_{i=(j-1) \cdot 2^k + 1}^{(j-1/2) \cdot 2^k} F_{\theta_0}(\cdot, t_i).
    \end{split}
\end{equation}
Substituting these two expressions back into the recurrence relation for $F_{\theta_k}$:
\begin{equation}
    \begin{split}
    F_{\theta_k}(\cdot, t_{j \cdot 2^k}) &= \frac{1}{2} \left( \frac{1}{2^{k-1}} \sum_{i=(j-1/2) \cdot 2^k + 1}^{j \cdot 2^k} F_{\theta_0}(\cdot, t_i) \right. \\
    &\quad + \left. \frac{1}{2^{k-1}} \sum_{i=(j-1) \cdot 2^k + 1}^{(j-1/2) \cdot 2^k} F_{\theta_0}(\cdot, t_i) \right) \\
    &= \frac{1}{2 \cdot 2^{k-1}} \left( \sum_{i=(j-1) \cdot 2^k + 1}^{j \cdot 2^k} F_{\theta_0}(\cdot, t_i) \right) \\
    &= \frac{1}{2^k} \sum_{i=(j-1) \cdot 2^k + 1}^{j \cdot 2^k} F_{\theta_0}(\cdot, t_i).
    \end{split}
\end{equation}
    
This result matches the form of the inductive hypothesis for round $k$. Thus, the proposition holds by induction.
Finally, after $N$ rounds of distillation, we obtain the single-step model $F_\theta \equiv F_{\theta_N}$. This model is trained to simulate the entire generation process from time $T$ to $0$. This corresponds to the case where $k=N$ and we are evaluating over the full interval, i.e., $j=1$. Setting $k=N$ and $j=1$ in our proven formula, we have:
\begin{equation}
    F_\theta(\cdot, T) = F_{\theta_N}(\cdot, t_{2^N}) = \frac{1}{2^N} \sum_{i=1}^{2^N} F_{\theta_0}(\cdot, t_i).
\end{equation}
Replacing $F_{\theta_0}$ with $F_{\text{teacher}}$ and reintroducing the state dependence $\boldsymbol{x}_{t_i}$, we arrive at the statement of the proposition:
\begin{equation}
    F_\theta(\boldsymbol{x}_T, T) = \frac{1}{2^N} \sum_{i=1}^{2^N} F_{\text{teacher}}(\boldsymbol{x}_{t_i}, t_i).
\end{equation}

The original teacher model, $F_{\text{teacher}}$, is trained to approximate the instantaneous velocity, i.e., $F_{\text{teacher}}(\boldsymbol{x}_{t_i}, t_i) = \boldsymbol{v}(\boldsymbol{x}_{t_i}, t_i)$. As the number of distillation rounds $N \to \infty$, the sum converges to an integral:
\begin{equation}
    \lim_{N \to \infty} \frac{1}{2^N} \sum_{i=1}^{2^N} {v}(\boldsymbol{x}_{t_i}, t_i) = \frac{1}{T} \int_0^T {v}(\boldsymbol{x}_\tau, \tau) \, d\tau.
    \label{eq:pgd_integral_2}
\end{equation}
This derivation shows that the $F_\theta(\boldsymbol{x}_t, t)$ modeled by PGD models the mean velocity over the time interval $[0, T]$.

\end{proof}

\section*{Details of the Experiment in Section \ref{sec:toy_experiment}}
\label{appendix:toy_experiment_details}
The prior and target distributions are illustrated in Figure~\ref{fig:data_dist}. 
Our model is constructed from a stack of Multi-Layer Perceptrons (MLPs) with residual connection, as shown in Figure \ref{fig:data_dist} (b). 
We scale up the model by either increasing the dimensionality of the hidden states or by adding more MLP layers. 
The specific hyperparameter configurations for each model size are detailed in Table~\ref{tab:hyperparams_toy}.

The generation results from the flow matching teacher are shown in Figure~\ref{fig:toy_exp_teacher}. 
The corresponding results from the student models after Trajectory Distillation (TD), across varying model sizes, are presented in Figure~\ref{fig:toy_exp_TD}. 
These observations conform to our analysis in Section~\ref{sec:toy_experiment}, demonstrating that TD performance improves as number of model parameters increases. However, a key finding is that while student performance scales with model capacity, the multi-step teacher's performance has already saturated on this relatively simple task; increasing its parameter count yields no further improvement. 
Furthermore, even when scaling the student model to tens of times the size of the smallest configuration, existing state-of-the-art TD methods still fail to perfectly replicate the teacher's trajectory. Such a result implies that trajectory distillation is essentially an information compression process, which is hard to losslessly replicate teacher's multi-step trajectory.

\begin{figure}
    \centering
    \includegraphics[width=\linewidth]{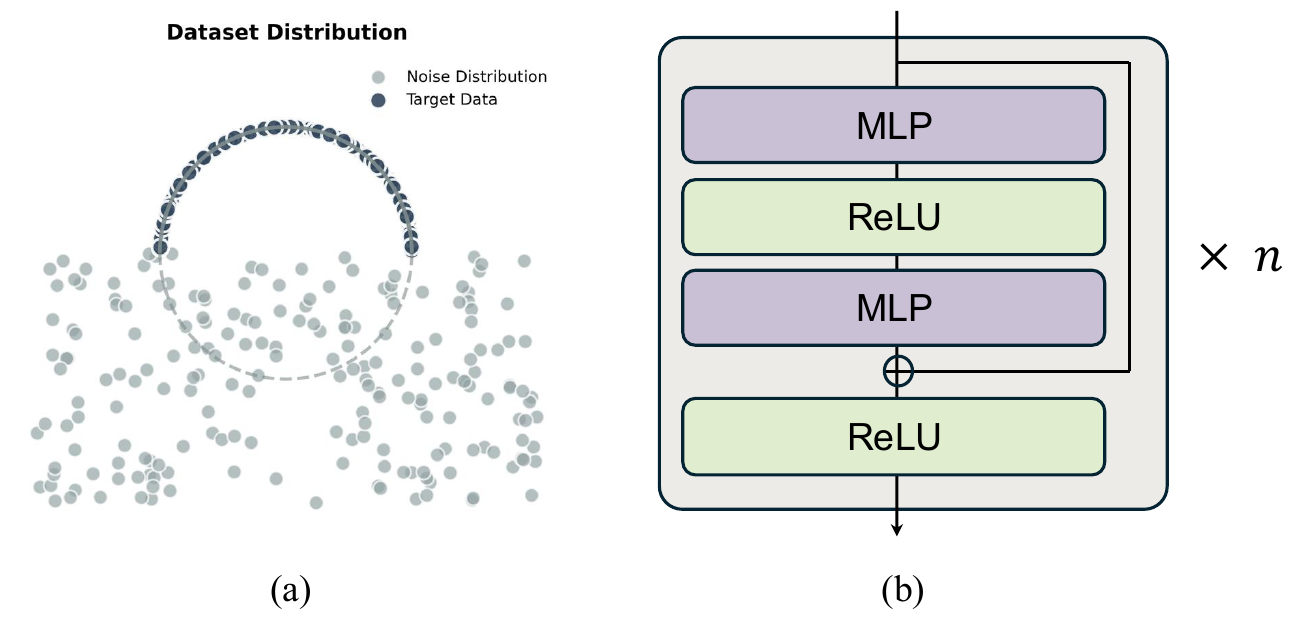}
    \caption{(a) Visualization of the prior (noise) and target distributions. (b) the model architecture used in Section \ref{sec:toy_experiment}}
    \label{fig:data_dist}
\end{figure}

\begin{table}[h]
    \centering
    \caption{}
    \begin{tabular}{lccccc}
        \toprule
        size & params & hidden dim & layers & lr \\
        \midrule
        S & 5.54K & 32 & 1 & 1e-3 \\
        B & 29.63K & 64 & 2 & 1e-3 \\
        L & 46.27K & 64 & 4 & 5e-4 \\
        XL & 79.55K & 64 & 8 & 1e-4 \\
        XXL & 146.11K & 64 & 16 & 1e-4 \\
        XXXL & 279.23K & 64 & 32 & 7e-5 \\
        \bottomrule
    \end{tabular}
    \label{tab:hyperparams_toy}
\end{table}

\section*{Implementation Details}
\label{appendix:implementation_details}

\subsection*{Class Conditional Generation}

For the ImageNet experiments, the training is divided into two stages. 
In Stage 1, we trained the model for 20 epochs. We used the Adam optimizer with a batch size of 256 and a learning rate of $1 \times 10^{-4}$. The optimizer's momentum parameters $(\beta_1, \beta_2)$ were set to $(0.9, 0.95)$. Following the distillation approach of MeanFlow~\cite{Meanflow}, we set the ratio for $r \neq t$ to $100\%$. A Classifier-Free Guidance (CFG) scale of 1.75 was applied across all timesteps. In Stage 2, we fine-tuned the model for 10 epochs. The learning rate for the generator was set to $1 \times 10^{-5}$, while the learning rates for both the fake branch and the discriminator were set to $5 \times 10^{-5}$. For the combined loss function $\mathcal{L}_{\text{s2}}$ (Equation~\ref{eq:all_loss}), the weights for each component were set as follows: $\lambda_1 = 1$, $\lambda_2 = 0.05$, and $\lambda_3 = 0.01$.

\subsection*{Text-to-Image Generation}

For the text-to-image generation experiments, we trained our model on the CC3M dataset~\cite{cc3m}. In Stage 1, we trained the model for 25,000 iterations. Following the setup in~\cite{sana_sprint}, we employed the CAME optimizer~\cite{came} with a learning rate of $1 \times 10^{-5}$ and a batch size of 256. We applied a Classifier-Free Guidance (CFG) scale of 4.5 across all diffusion timesteps. In Stage 2, the model was fine-tuned for an additional 8,000 iterations. We set distinct learning rates for different components: $5 \times 10^{-7}$ for the generator, $5 \times 10^{-6}$ for the fake branch, and $5 \times 10^{-5}$ for the discriminator. The weights for the loss components in $\mathcal{L}_{\text{s2}}$ (Equation~\ref{eq:all_loss}) were configured as $\lambda_1 = 1$, $\lambda_2 = 0.001$, and $\lambda_3 = 0.01$. For the 1-step generation setting, we set the maximum timestep $T$ to 1.0 for all methods. The only exception is SANA-sprint, for which $T$ is set to 1.5708 ($\pi/2$) following its adoption of the trigflow \cite{sCM}. For the 2-step setting, the inference schedule consists of two equally spaced timesteps within the $[0, T]$ interval.

\section*{Limitation and Future work}
\label{appendix:limitation}
Although our work demonstrates powerful minimal-step generation ability, we identify several directions for future research: 1. Towards a Unified End-to-End Framework: Our current two-stage design, which first distills trajectory knowledge before refining the distribution (DMD), was a deliberate choice to maximize model quality. A compelling future direction involves developing a more unified, end-to-end training process that synergistically integrates the principles of TD and DMD. This advancement could further streamline the distillation pipeline and potentially unlock new performance frontiers. 2. Exploring Non-Adversarial Alignment: The adversarial mechanism in our distribution refinement stage was a strategic choice to effectively prevent mode collapse. Although successful, exploring alternative non-adversarial distribution alignment techniques presents a fruitful research direction. Such methods could offer different trade-offs in training stability and computational cost, potentially leading to even more efficient and robust distillation solutions.
3. Scaling to New Models and Domains: Building on the strong results presented, the next step is to validate the scalability and generalizability of our method. We plan to apply the HD framework to larger-scale foundation models and more complex generative domains, such as video synthesis. This will further broaden its impact across the field.

\begin{figure}
    \centering
    \includegraphics[width=\linewidth]{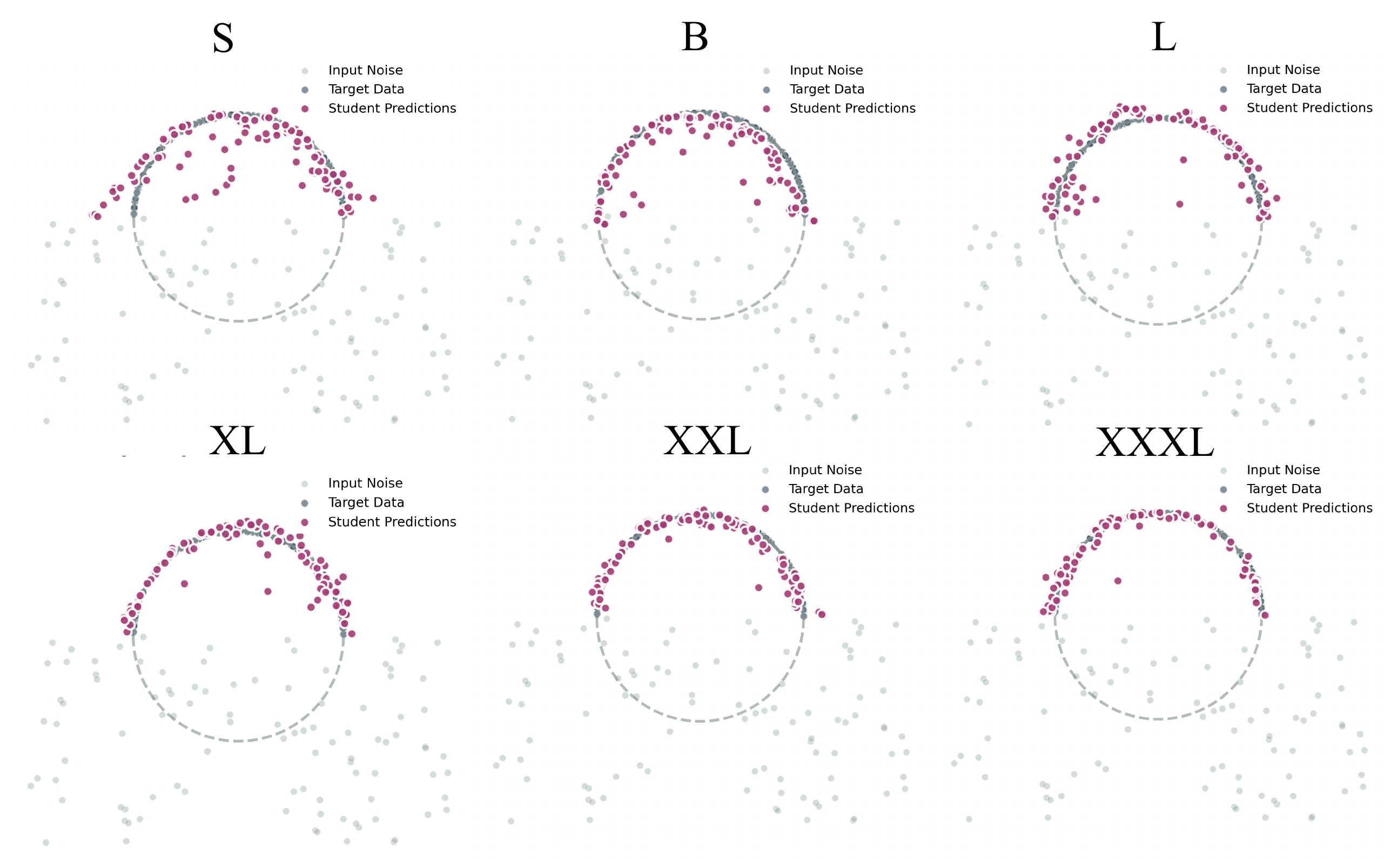}
    \caption{Qualitative comparison of one-step student models trained via Trajectory Distillation across various model sizes (S to XXXL).}
    \label{fig:toy_exp_TD}
\end{figure}

\begin{figure}
    \centering
    \includegraphics[width=\linewidth]{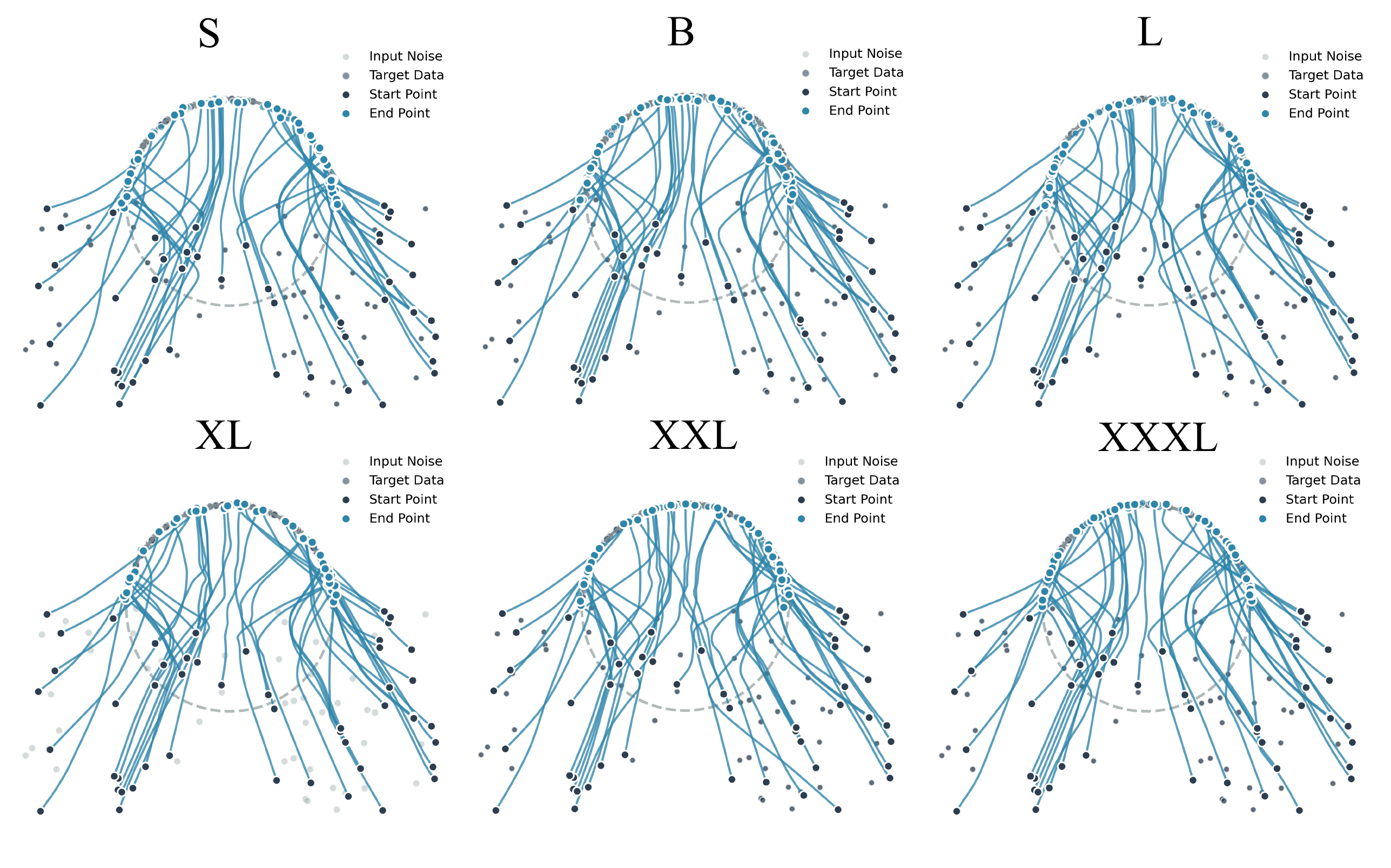}
    \caption{Qualitative comparison of 50-step flow matching teachers across various model sizes (S to XXXL).}
    \label{fig:toy_exp_teacher}
\end{figure}

\section*{Evaluation with FLOPs}
In this section we assess the acceleration achieved by the distilled model in terms of FLOPs
(floating‑point operations).  Experiments on ImageNet with a resolution of $256\times256$
are reported in Table~\ref{tab:image_net_256}.  Our method compresses the inference
procedure of the teacher model (originally requiring 250 diffusion steps) into a single
step, while preserving performance that is comparable to the multi‑step baseline.
To quantify the speed‑up, we compute the FLOPs reduction shown in Table~\ref{tab:flops_evaluation}.
Our method reduces the computational cost from $29.98$ GFLOPs (teacher) to $0.43$ GFLOPs (student), corresponding to a factor of
$\frac{29.98}{0.43}\approx 69.72$ . Thus, on ImageNet with $256\times256$ inputs, our approach significantly reduces the inference cost without sacrificing accuracy.

\begin{table}[h]
    \centering
    \caption{FLOPs and parameter count evaluation results of SiT-XL/2 at 256×256 input resolution (batch size = 1).}
    \setlength{\tabcolsep}{8pt}
    \renewcommand{\arraystretch}{1.3}
    \begin{tabular}{lcccc}  
        \toprule
        & & HD(ours) & Teacher & Teacher \\
        & NFE & 1 & 50 & 250 \\  
        \midrule
        \multirow{2}{*}{DiT} 
            & FLOPs (G)          & 0.12   & 5.93    & 29.67 \\
            & Params (M)         & 676.76 & 676.76  & 676.76 \\
        \midrule
        \multirow{2}{*}{VAE} 
            & FLOPs (G)          & 310.62 & 310.62  & 310.62 \\
            & Params (M)         & 83.65  & 83.65   & 83.65 \\
        \midrule
        \multirow{2}{*}{Total} 
            & FLOPs (T)          & 0.43   & 6.24    & 29.98 \\
            & Params (M)         & 760.41 & 760.41  & 760.41 \\
        \bottomrule
    \end{tabular}
    \label{tab:flops_evaluation}
\end{table}

\section*{Qualitative Results on ImageNet $256 \times 256$}

We provide additional qualitative results of the comparison between our HD method and multi-step teacher on ImageNet $256 \times 256$ as illustrated in Figure \ref{fig:qualitative_imagenet}. The results demonstrate that our method achieves generation quality in a single step that rivals the 250-step teacher model.

\begin{figure*}[ht]

    \centering
    \includegraphics[width=0.8\linewidth]{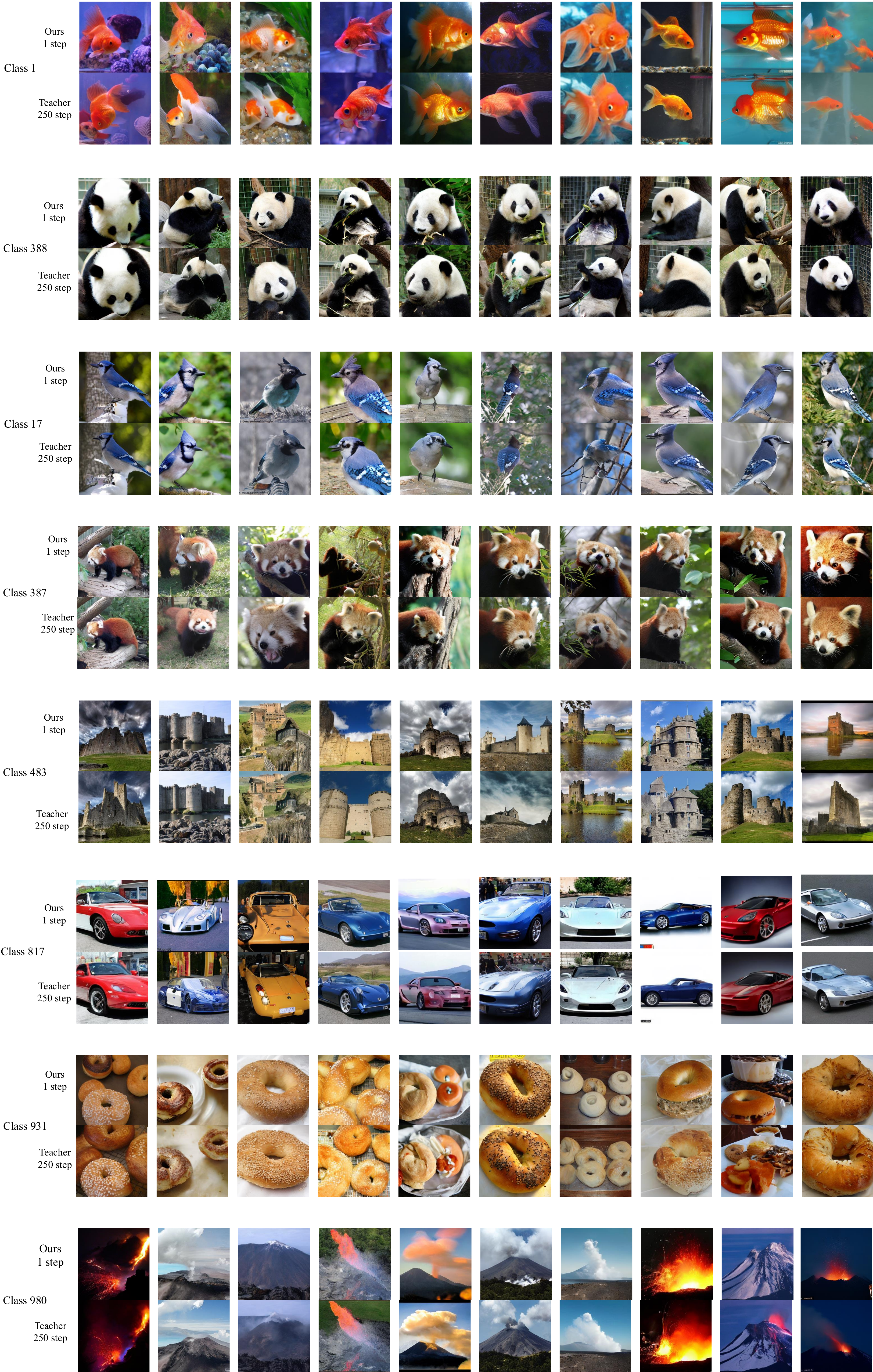}
    \caption{Qualitative comparison of our 1-step student of HD
     method and 250-step SiT teacher on ImageNet $256\times256$.}
    \label{fig:qualitative_imagenet}
\end{figure*}

\end{document}